\newcommand{\mat}[1]{\MakeUppercase{\mathbf{#1}}}
\renewcommand{\vec}[1]{\MakeLowercase{\mathbf{#1}}}
\newcommand{\expnumber}[2]{{#1}e{#2}}
\renewcommand{\mod}[1]{\hspace{1pt}\text{mod}\hspace{1pt}#1}
\newcommand{\R}{\mathbb{R}}
\newlength\mylen
\newcommand{\nonl}{\renewcommand{\nl}{\let\nl\oldnl}}
\newcommand{\X}{\mat{X}}
\newcommand{\Y}{\mat{X}}
\newcommand{\Fbb}{\mathbb{F}}
\newcommand{\Rbb}{\mathbb{R}}
\renewcommand{\vec}[1]{\ensuremath{\mathbf{#1}}}
\newcommand{\vecs}[1]{\ensuremath{\mathbf{\boldsymbol{#1}}}}
\newcommand{\mats}[1]{\ensuremath{\mathbf{\boldsymbol{#1}}}}
\newcommand{\ten}[1]{\mat{\ensuremath{\boldsymbol{\mathcal{#1}}}}}
\xdef\csname \x bb\endcsname{\noexpand\ensuremath{\noexpand\mathbb{\x}}}
\xdef\csname \x cal\endcsname{\noexpand\ensuremath{\noexpand\mathcal{\x}}}
\xdef\csname \x t\endcsname{\noexpand\ensuremath{\noexpand\ten{\x}}}
\xdef\csname \x b\endcsname{\noexpand\ensuremath{\noexpand\mat{\x}}}
\xdef\csname \x b\endcsname{\noexpand\ensuremath{\noexpand\vec{\x}}}
\xdef\csname \x b\endcsname{\noexpand\ensuremath{\noexpand\vec{\x}}}
\newcommand{\W}{\mat{W}}
\newcommand{\V}{\mat{V}}
\newcommand{\x}{\vec{x}}
\newtheorem*{theorem*}{Theorem}
\newtheorem*{corollary*}{Corollary}%
\newtheorem*{proposition*}{Proposition}%
\newtheorem{theorem}{Theorem}%
\newtheorem{proposition}{Proposition}%
\newtheorem*{pbm*}{Problem}%
\DeclareMathOperator*{\argmin}{arg\,min}
\DeclareMathOperator*{\rank}{rank}
\newcommand{\nstates}{n}
\newcommand{\szerosymbol}{\alpha}
\newcommand{\szero}{\vecs{\szerosymbol}}
\newcommand{\sinfsymbol}{\omega}
\newcommand{\sinf}{\vecs{\sinfsymbol}}
\DeclareDocumentCommand{\wa}{  O{A} O{\szero} O{\sinf} }%
{(#2,\{\mat{#1}^\sigma\}_{\sigma\in\Sigma},#3)}
\DeclareDocumentCommand{\waR}{  O{A} O{\Rbb^\nstates} O{\szero} O{\sinf} }%
{(#2,#3,\{\mat{#1}^\sigma\}_{\sigma\in\Sigma},#4)}
\newcommand{\vvsinfsymbol}{\Omega}
\newcommand{\vvsinf}{\mats{\vvsinfsymbol}}
\DeclareDocumentCommand{\vvwa}{  O{A} O{\szero} O{\vvsinf} }%
{(#2,\{\mat{#1}^\sigma\}_{\sigma\in\Sigma},#3)}
\newcommand{\tzerosymbol}{\alpha}
\newcommand{\tzero}{\vecs{\tzerosymbol}}
\newcommand{\tinfsymbol}{\omega}
\newcommand{\tinf}{\vecs{\tinfsymbol}}
\DeclareDocumentCommand{\wta}{ O{T} O{\Rbb^\nstates} O{\tzero} O{\tinf} O{\Fcal}}%
{(#2,#3,\{\ten{#1}^g\}_{g\in #5_{\geq 1}},\{#4^\sigma\}_{\sigma\in #5_0})}
\DeclareDocumentCommand{\trees}{g}{\IfNoValueTF{#1}{\mathfrak{T}}{\mathfrak{T}_{#1}}}
\DeclareDocumentCommand{\contexts}{g}{\IfNoValueTF{#1}{\mathfrak{C}}{\mathfrak{C}_{#1}}}
\newcommand{\gwmprod}{\diamond}
\DeclareDocumentCommand{\gwm}{  O{M} O{\Fbb^\nstates}}{(#2, \{\ten{#1}^x\}_{x\in\Sigma})}
\DeclareDocumentCommand{\gwmcirc}{  O{M} O{\Rbb^\nstates}}{(#2, \{\mat{#1}^\sigma\}_{\sigma\in\Sigma})}
\DeclareDocumentCommand{\dgwm}{ O{M} O{\Fbb^\nstates}}{(#2, \{\ten{#1}^x\}_{x\in\Sigma},\gwmprod)}
\xdef\csname \x bb\endcsname{\noexpand\ensuremath{\noexpand\mathbb{\x}}}
\xdef\csname \x cal\endcsname{\noexpand\ensuremath{\noexpand\mathcal{\x}}}
\xdef\csname \x ten\endcsname{\noexpand\ensuremath{\noexpand\ten{\x}}}
\xdef\csname \x mat\endcsname{\noexpand\ensuremath{\noexpand\mat{\x}}}
\xdef\csname \x vec\endcsname{\noexpand\ensuremath{\noexpand\mat{\x}}}
\pgfplotsset{compat=1.17}
\newcommand{\revision}[1]{{#1}}
\newcommand{\revisionICML}[1]{{#1}}
\icmltitlerunning{ROSA: Random Subspace Adaptation for Efficient Fine-Tuning}
\begin{document}

\twocolumn[
\icmltitle{ROSA: Random Subspace Adaptation for Efficient Fine-Tuning}



\icmlsetsymbol{equal}{*}

\begin{icmlauthorlist}
\icmlauthor{Marawan Gamal Abdel Hameed}{sch}
\icmlauthor{Aristides Milios}{mcgill}
\icmlauthor{Siva Reddy}{mcgill}
\icmlauthor{Guillaume Rabusseau}{sch,cifar}
\end{icmlauthorlist}

\icmlaffiliation{sch}{DIRO \& Mila, Université de Montréal}
\icmlaffiliation{mcgill}{Mila \& McGill University}
\icmlaffiliation{cifar}{CIFAR AI chair}

\icmlcorrespondingauthor{Marawan Gamal Abdel Hameed}{marawan.gamal@mila.quebec}
\icmlcorrespondingauthor{Guillaume Rabusseau}{grabus@iro.umontreal.ca}

\icmlkeywords{Machine Learning, ICML}

\vskip 0.3in
]



\printAffiliationsAndNotice{}  

\begin{abstract}
Model training requires significantly more memory, compared with inference. 
Parameter efficient fine-tuning (PEFT) methods provide a means of adapting large models to downstream tasks using less memory. However, existing methods such as adapters, prompt tuning or low-rank adaptation (LoRA) either introduce latency overhead at inference time or achieve subpar downstream performance compared with full fine-tuning.
In this work we propose \textbf{R}and\textbf{o}m \textbf{S}ubspace \textbf{A}daptation~(ROSA), a method that outperforms previous PEFT methods by a significant margin, while maintaining a zero latency overhead during inference time. In contrast to previous methods, ROSA is able to adapt subspaces of arbitrarily large dimension, better approximating full-finetuning. We demonstrate both theoretically and experimentally that this makes ROSA strictly more expressive than LoRA, without consuming additional memory during runtime.  As PEFT methods are especially useful in the natural language processing domain, where models operate on scales that make full fine-tuning very expensive, we evaluate ROSA in two common NLP scenarios: natural language generation (NLG) and natural language understanding (NLU) with GPT-2 and RoBERTa, respectively. We show that on almost every GLUE task ROSA outperforms LoRA by a significant margin, while also outperforming LoRA on NLG tasks. Our code is availble at \href{https://github.com/rosa-paper/rosa}{github.com/rosa-paper/rosa}

\end{abstract}
\section{Introduction}
\label{sec:introduction}
The advent of large language models pre-trained on web-size corpora~(pre-trained LMs, or PLMs) has led to remarkably performant models in the domain of natural language processing~\citep{brown2020language, devlin2019bert}. As the size of such models ranges from hundreds of millions to hundreds of billions of parameters~\citep{touvron2023llama}, adapting them to downstream tasks is challenging and computationally expensive~\cite{peng2023instruction}. Compared to inference, training requires substantially more memory (2x-4x as much). For example, the LLAMA-7B model requires 14GB and 56GB during inference and training respectively~\cite{touvron2023llama}. 

To alleviate the burdensome memory requirements of adapting PLMs to downstream tasks, various memory efficient methods have been proposed~\citep{houlsby2019parameter, lin2020exploring, guo2021diffpruning, hu2022lora, li2021prefix, lester2021power, sung2021fish, liu2022ptuning, liu2020tfew, qiu2023oft}. The commonality among these methods is the maintenance of fixed PLM weights while introducing a minimal quantity of trainable parameters. Although solutions like LoRA~\citep{hu2022lora} and \textsf{(IA)\textsuperscript{3}}~\citep{liu2020tfew} are effective and do not impose any additional inference latency, they implicitly limit the expressivity of adapted models. For instance, LoRA adapts low-rank matrices  that are added in parallel to fixed pre-trained weights. While this approach makes it possible to fine-tune large PLMs with reduced memory footprint compared to full fine tuning, it introduces an unavoidable bias: the pre-trained weight matrices can only be fine-tuned to matrices that are ``a low-rank matrix away'' from the initial weights.

\begin{figure*}[ht]
\centering
\begin{minipage}[b]{0.28\textwidth}
\vspace{0.5cm}
\begin{tikzpicture}[
    scale=1.1,
    transform shape,
    B/.style = {decorate, decoration={brace,amplitude=5pt,raise=5pt,mirror}}
]
    \def\numRows{8}
    \def\numCols{8}

    \def\rank{6}
    \def\lowRank{2}

    \def\fixedParamsIds{0,2,3,4}
    \def\trainParamsIds{1,5}
    
    \def\dx{5} 
    \def\dy{5} 
    
    \node[draw, 
    minimum width=\numCols*\dx, 
    minimum height=\numRows*\dy, 
    ] (rect1) at (0,0) {$\mat{w}$};

    \node[right=0.25cm of rect1] (equal1) {$=$};
    \node[above=of equal1, yshift=-1.13cm,
    font=\scriptsize
    ] (svd) {SVD};
    
    \node[draw, 
    minimum width={\rank*\dx}, 
    minimum height={\numRows*\dy}, 
    right=0.25cm of equal1] (rect2) {};

    \node[draw, 
    minimum width={\rank*\dx}, 
    minimum height={\rank*\dy}, 
    right=0.1cm of rect2.north east,
    anchor=north west
    ] (rect3) {};

    \node[draw, 
    minimum width={\numCols*\dx}, 
    minimum height={\rank*\dy}, 
    right=0.1cm of rect3] (rect4) {};

    \foreach \col in {1,...,\rank} {
      \draw ([xshift=\col*\dx]rect2.south west) -- ([xshift=\col*\dx]rect2.north west);
    }
    \foreach \row in {1,...,\numRows} {
      \draw ([yshift=\row*\dy]rect2.south west) -- ([yshift=\row*\dy]rect2.south east);
    }

    \foreach \col in {1,...,\rank} {
      \draw ([xshift=\col*\dx]rect3.south west) -- ([xshift=\col*\dx]rect3.north west);
    }
    \foreach \row in {1,...,\rank} {
      \draw ([yshift=\row*\dy]rect3.south west) -- ([yshift=\row*\dy]rect3.south east);
    }

    \foreach \col in {1,...,\numCols} {
      \draw ([xshift=\col*\dx]rect4.south west) -- ([xshift=\col*\dx]rect4.north west);
    }
    \foreach \row in {1,...,\rank} {
      \draw ([yshift=\row*\dy]rect4.south west) -- ([yshift=\row*\dy]rect4.south east);
    }

    \foreach \col in \fixedParamsIds {
         \foreach \row in {1,...,\numRows} {
            \fill[blue, opacity=0.1] 
            ([yshift=(\row-1)*\dy, xshift=\col*\dx]rect2.south west) rectangle 
            ([yshift=\row*\dy, xshift=(\col*\dx)+\dx]rect2.south west);
        };
    };

    \foreach \col in \fixedParamsIds {
        \fill[blue, opacity=0.1]  
        ([yshift=(\rank-1-\col)*\dy, xshift=\col*\dx]rect3.south west) 
        rectangle 
        ([yshift=(\rank-\col)*\dy, xshift=(\col+1)*\dx]rect3.south west);
    };

    \foreach \row in \fixedParamsIds {
        \foreach \col in {1,...,\numCols} {
            \fill[blue, opacity=0.1] 
            ([yshift=(\rank-1-\row)*\dy, xshift=(\col-1)*\dx]rect4.south west) 
            rectangle 
            ([yshift=((\rank-1-\row)*\dy)+\dy, xshift=\col*\dx]rect4.south west);
        }
    }

    \foreach \col in \trainParamsIds {
         \foreach \row in {1,...,\numRows} {
            \fill[red, opacity=0.4] 
            ([yshift=(\row-1)*\dy, xshift=\col*\dx]rect2.south west) rectangle 
            ([yshift=\row*\dy, xshift=(\col*\dx)+\dx]rect2.south west);
        };
    };

    \foreach \col in \trainParamsIds {
        \fill[red, opacity=0.4]  
        ([yshift=(\rank-1-\col)*\dy, xshift=\col*\dx]rect3.south west) 
        rectangle 
        ([yshift=(\rank-\col)*\dy, xshift=(\col+1)*\dx]rect3.south west);
    };

    \foreach \row in \trainParamsIds {
        \foreach \col in {1,...,\numCols} {
            \fill[red, opacity=0.4] 
            ([yshift=(\rank-1-\row)*\dy, xshift=(\col-1)*\dx]rect4.south west) 
            rectangle 
            ([yshift=((\rank-1-\row)*\dy)+\dy, xshift=\col*\dx]rect4.south west);
        }
    }

    \node[below= 1.5cm of equal1] (equal2) {$=$};

    \node[draw, 
    minimum width={\numCols*\dx}, 
    minimum height={\numCols*\dy}, 
    fill=blue!10,
    right=0.1cm of equal2] (rect5) {};

    \foreach \col in {1,...,\numCols} {
      \draw ([xshift=\col*\dx]rect5.south west) -- ([xshift=\col*\dx]rect5.north west);
    }
    \foreach \row in {1,...,\numRows} {
      \draw ([yshift=\row*\dy]rect5.south west) -- ([yshift=\row*\dy]rect5.south east);
    }

    \node[right=0.1cm of rect5] (add1) {$+$};

    \node[draw, 
    minimum width={\lowRank*\dx}, 
    minimum height={\numCols*\dy}, 
    fill=red!40,
    right=0.1cm of add1] (rect6) {};
    
    \node[draw, 
    minimum width={\numRows*\dx}, 
    minimum height={\lowRank*\dy}, 
    fill=red!40,
    right=0.1cm of rect6.north east,
    anchor=north west] (rect7) {};

    \foreach \col in {1,...,\lowRank} {
      \draw ([xshift=\col*\dx]rect6.south west) -- ([xshift=\col*\dx]rect6.north west);
    }
    \foreach \row in {1,...,\numRows} {
      \draw ([yshift=\row*\dy]rect6.south west) -- ([yshift=\row*\dy]rect6.south east);
    }

    \foreach \col in {1,...,\numCols} {
      \draw ([xshift=\col*\dx]rect7.south west) -- ([xshift=\col*\dx]rect7.north west);
    }
    \foreach \row in {1,...,\lowRank} {
      \draw ([yshift=\row*\dy]rect7.south west) -- ([yshift=\row*\dy]rect7.south east);
    }
    
    \draw [decorate,decoration={brace,amplitude=5pt,raise=5pt,mirror}, yshift = -0.2cm]
    (rect5.south west) -- (rect5.south east) node (curly_bracket)[black,midway, yshift =-0.65cm] 
    {$\mat{w}_\text{fixed}$};

    \draw [decorate,decoration={brace,amplitude=5pt,raise=5pt,mirror}, yshift = -0.2cm]
    (rect6.south west) -- (rect6.south east) node (curly_bracket)[black,midway, yshift =-0.65cm] 
    {$\mat{A}$};

    \draw [decorate,decoration={brace,amplitude=5pt,raise=5pt,mirror}, yshift = -0.2cm]
    (rect7.south west) -- (rect7.south east) node (curly_bracket)[black,midway, yshift =-0.65cm] 
    {$\mat{B}$};
  
\end{tikzpicture}
\end{minipage}
\hspace{0.2\textwidth} 
\begin{minipage}[b]{0.42\textwidth}
\vspace{0.5cm}
\begin{tikzpicture}[
    scale=1.1,
    transform shape,
    arrowHead/.style={-{Triangle[width=2mm, length=2mm]}},
    arrowLine/.style={line width=1mm},
    arrowLineDashed/.style={line width=1mm, dashed, dash pattern=on 3mm off 2mm},
    square/.style={draw, rectangle, rounded corners=1mm, minimum width=1.5cm, minimum height=1.5cm, fill=blue!10, font=\small},
    trapezium/.style={draw, shape=trapezium, trapezium left angle=70, trapezium right angle=70, minimum width=0.5cm, minimum height=0.75cm, rounded corners=1mm, fill=red!40, font=\small},
    rectangle/.style={draw, rounded corners=1mm, minimum width=3cm, minimum height=0.5cm, fill=gray!10, font=\small}
]

    \node[rectangle] (rectangle1) at (0, 2cm) {$\vec{h}$};
    
    \node[square] (square) at (-2 cm, 0) {$\mat{w}_{\text{fixed}}$};

    \draw[arrowHead, arrowLineDashed] (45:1cm) arc[start angle=45, end angle=135, radius=1cm];
    \draw[arrowHead, arrowLineDashed] (225:1cm) arc[start angle=225, end angle=315, radius=1cm];

    \draw[red, arrowHead, -{Stealth[length=10pt]}] (2,2) .. controls (3.5,1) and  (3.5,-1) .. node[midway, fill=white, xshift=0.1cm] {$\nabla_{(\mat{A},\mat{B})}\mathcal{L}$} (2,-2);

    \node[font=\tiny] at (0,0) {\parbox{2.5cm}{\centering Factorize \\ $\mat{W} = \mat{W}_\text{fixed}+ \mat{A}\mat{B}$}};

    \node[trapezium, shape border rotate=180] (top_trap) at (2cm, 0.5) {$\mat{b}  $};
    \node[trapezium] (bottom_trap) at (2cm, -0.5) {$\mat{a}$};

    \node[rectangle] (rectangle2) at (0, -2cm) {$\vec{x}$};

    \node[minimum height=0.75cm] (e1_target) at ([shift={(-1cm,-0.1cm)}]rectangle1.south) {};
    \node[minimum height=0.75cm] (e1_source) at ([shift={(0,0.1cm)}]square.north) {};

    \node[minimum height=0.75cm] (e2_target) at ([shift={(1cm,-0.1cm)}]rectangle1.south) {};
    \node[minimum height=0.75cm] (e2_source) at ([shift={(0,0.1cm)}]top_trap.north) {};

    \node[minimum height=0.75cm] (e3_source) at ([shift={(-1cm, 0.1cm)}]rectangle2.north) {};
    \node[minimum height=0.75cm] (e3_target) at ([shift={(0,-0.1cm)}]square.south) {};

    \node[minimum height=0.75cm] (e4_source) at ([shift={(1cm, 0.1cm)}]rectangle2.north) {};
    \node[minimum height=0.75cm] (e4_target) at ([shift={(0,-0.1cm)}]bottom_trap.south) {};

    \draw[arrowHead, arrowLine] (e1_source) -- (e1_target);
    \draw[arrowHead, arrowLine] (e2_source) -- (e2_target);
    \draw[arrowHead, arrowLine] (e3_source) -- (e3_target);
    \draw[arrowHead, arrowLine] (e4_source) -- (e4_target);
    
\end{tikzpicture}
\vspace{0.3cm}
\end{minipage}
\caption{Illustration of ROSA. Parameter matrix $\mat{W}$ is factorized using the singular value decomposition (SVD) and split into smaller trainable matrices $(\mat{A}, \mat{B})$ and a larger fixed matrix $(\mat{W}_{\text{fixed}})$. Gradients during back-propagation are only computed with respect to $(\mat{A}, \mat{B})$. The split is then merged after a specified number of training iterations, and the process is repeated. ROSA updates an increasingly larger subspace of $\mat{W}$ over the course of training while remaining memory efficient.}
\label{fig:crown-jewel}
\end{figure*}
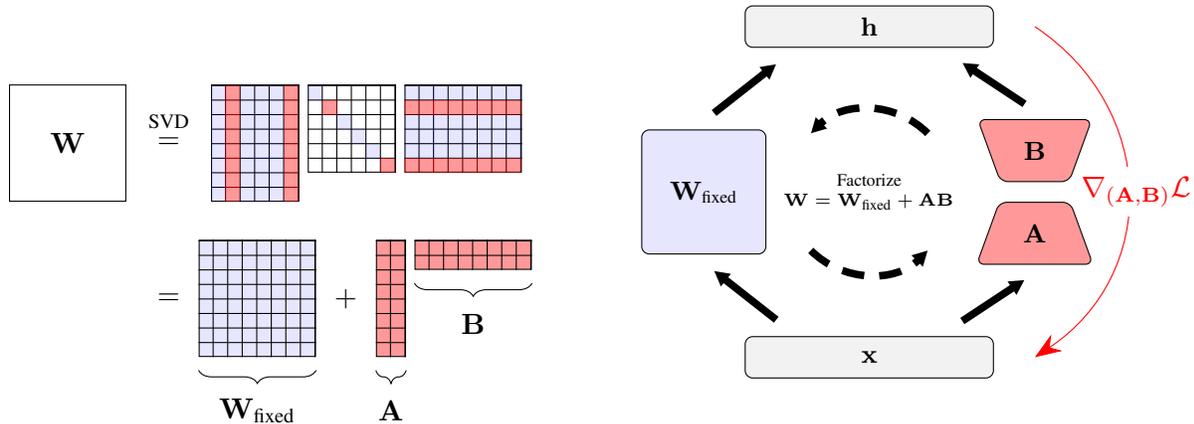

\revisionICML{
In this work, we propose ROSA: \textbf{R}and\textbf{o}m \textbf{S}ubspace \textbf{A}daptation, which expands the expressivity of adapted models, while remaining as memory efficient as LoRA. Similarly to LoRA, ROSA satisfies memory constraints by selectively fine-tuning low-rank matrices in parallel to fixed pre-trained weight matrices. Thus allowing users to fine-tune models in resource constrained settings. 
In such settings, PEFT methods are the only viable alternatives as gradient check-pointing and  layer-wise training (loading/unloading one layer at a time into GPUs) are prohibitively slow ( layer-wise training can be up to $20\times$ slower than full fine-tuning or ROSA). 
}
\revisionICML{
However, PEFT methods such as LoRA are limited in expressivity compared with approaches such as layer-wise training and gradient check-pointing, which effectively simulate full fine-tuning. ROSA alleviates this limitation by continuously \emph{sampling} different low-rank trainable sub-spaces and iteratively \emph{merging} learned information into fixed weights throughout fine-tuning, as depicted in Figure~\ref{fig:crown-jewel}}.
From a theoretical perspective, we formally characterize the implicit low rank bias of LoRA, show how this bias can be detrimental even on a simple regression task, and demonstrate that ROSA does not suffer from this limitation~(see Theorem~\ref{thm:lora-estimation-bias}). Even further, our results show that (i)  ROSA can fine-tune pre-trained weights to arbitrary target weights~(i.e. is as expressive as full fine-tuning), and (ii) while LoRA trades \emph{expressivity} for lower memory requirements, ROSA instead trades \emph{convergence speed} with memory usage. These results are clearly and intuitively illustrated on a simple synthetic experiment presented in Section~\ref{sec:xp-synthetic}.

From a practical perspective, we show that ROSA achieves performance on par with full fine-tuning and consistently outperforms state-of-the-art methods such as LoRA and \textsf{(IA)\textsuperscript{3}} \citep{liu2020tfew} on natural language understanding (GLUE) \citep{wang2019glue} and natural language generation (E2E) \citep{novikova2017e2e} tasks by significant margins. 
Lastly, we note that ROSA carries a significant advantage over approaches such as adapters \citep{houlsby2019parameter} and prompt tuning \citep{li2021prefix, lester2021power}, as it introduces no additional latency overhead during inference time.

In summary, our key contributions are:
\begin{itemize}
    \item Demonstrating both empirically and theoretically, that the low rank nature of LoRA can detrimentally limit its expressiveness.
    \item Introducing ROSA, a PEFT method that circumvents the low rank limitation of LoRA while \revision{remaining  as memory efficient as LoRA}.
    \item Theoretically showing that ROSA is more expressive than LoRA and can be as expressive as full fine-tuning. 
    \item Conducting extensive experiments showing that ROSA consistently outperforms LoRA by a significant margin on natural language understanding (GLUE) and  natural language generation (E2E) benchmarks.
\end{itemize}

\section{Related Work}

PEFT defines a class of methods to alleviate memory and compute requirements during adaptation of large models to downstream tasks by tuning only a relatively small number of added parameters, rather than the tuning all the parameters of the model itself. 
\paragraph{Adapters:} Adapter methods such as~\cite{houlsby2019parameter} inject layers in between certain modules of each transformer block. These layers have relatively few parameters, as they project the original features down into a smaller set of dimensions, then scale them back up after applying the adapter's feed-forward layer. This structure necessarily leads to a latency overhead.
\paragraph{Prompt and prefix tuning:} Prompt and prefix tuning are an efficient means of adapting models via continuous optimization of prefixes added to input prompts~\citep{li2021prefix, lester2021power, liu2022ptuning}. While such approaches are memory efficient, they require reserving a portion of the available sequence length during downstream adaptation. Moreover, prompt tuning methods can be challenging to optimize as pointed out in~\cite{hu2022lora}.
\paragraph{LoRA:} Our work is most similar to LoRA~\citep{hu2022lora}, which has been shown to outperform the aforementioned approaches and to mitigate limitations such as increased inference latency and reduced sequence length capacity. LoRA adds a trainable low rank matrix to the frozen original weight matrix. The low rank matrix, parameterized as the product of two small matrices, is then fine-tuned instead of the original model weights. The authors of LoRA hypothesize that during task-specific fine-tuning, the model weight updates have a low ``intrinsic dimension'' and thus can be effectively approximated by low-rank matrices. While this may be true for some downstream tasks, we show both theoretically and empirically that this is not always the case and that restricting the weights update to a low intrinsic dimension can be detrimental.\paragraph{\textsf{(IA)\textsuperscript{3}}:} Another widely known PEFT method is \textsf{(IA)\textsuperscript{3}}~\citep{liu2020tfew}. \textsf{(IA)\textsuperscript{3}} (Infused Adapter by Inhibiting and Amplifying Inner Activations) adds learned vectors to the attention and feedforward layers of the transformer, which are used to rescale the activations of these modules. \textsf{(IA)\textsuperscript{3}} further reduces the number of trainable parameters from LoRA, and makes the proportion of trainable parameters fixed, as the size of the rescaling vectors is directly dependent on the dimensions of the transformer's weight matrices. 

\paragraph{AdaLoRA \& LASER:} Another competitive approach, AdaLoRA~\citep{zhang2023adaptive} allocates a fixed parameter budget across the layers of a model dynamically by manipulating the rank, providing a lower rank to less important modules and vice versa. This is done via a novel importance metric that quantifies the contribution of a given module to the model's overall performance. \revisionICML{In contrast, LASER \cite{sharma2023truth} replaces select matrices with their low rank approximations using SVD. Moreover LASER demonstrates that greedy search over this intervention space can lead to improved model performance. The performance increase is hypothesized to be due to the resultant select matrices better focusing on important features. In connection to our work, this indicates that some of the performance gain by using SVD in our rank reduction can be due to a similar cause.}

\paragraph{Other methods:} BitFit \citep{zaken2022bitfit} freezes all parameters except bias terms. FISH~\cite{sung2021fish} optimizes a sparse difference vector to be summed with the original model parameters.

\section{Method}

In this section we describe our proposed approach, ROSA: Random Subspace Adaptation. The purpose of ROSA is to provide a means for fine-tuning large models in memory constrained settings while remaining competitive with full fine-tuning in terms of performance.  \revision{After introducing ROSA and demonstrating its memory efficiency in Section~\ref{sec:method-ROSA}, we provide a theoretical analysis showing that ROSA is provably more expressive than LoRA in Section~\ref{sec:theory}.}

\subsection{ROSA}
\label{sec:method-ROSA}
In LoRA~\citep{hu2022lora}, pre-trained models are adapted to alternative tasks by adding a low rank matrix $\mat{AB}$, where $\mat{A} \in \R^{M \times R}, \mat{B} \in \R^{R \times N}$, in parallel to pre-trained weights $\mat{W} \in \R^{M \times N}$. The output of the LoRA layer is  given by
\begin{equation}
    \phi(\x)= \mat{W}\vec{x} + \mat{A}\mat{B}\vec{x}.
\end{equation}
The adapter weights $\mat A$ and $\mat B$ are initialized such that $\mat A \mat B = \mat 0$ and are the only parameters being updated during training.
LoRA is memory efficient as $\mat{W}$ remains fixed during training (no gradient buffers necessary for the full weights) and typically $R \ll \textsc{min}(M, N)$. The rank ($R$) of the trainable matrices is  chosen such that the training procedure satisfies device memory constraints. 

Constraining the updates to a \emph{fixed} low rank subspace \emph{initialized at zero} induces two limitations. First, the low rank nature of LoRA is such that the difference between the fine-tuned weight matrix $\mat W+\mat A\mat B$ and the pre-trained weights $\W$ is constrained to be a low rank matrix. This significantly hinders the ability of LoRA to fine-tune a given model to an arbitrary target model/task. Note that even in the case where the target weights $\mat W^*$ are close to the pre-trained weights $\W$~(w.r.t. e.g., the Frobenius norm), this low-rank constraint creates an unavoidable bias when the difference $\mat W^* - \mat W$ is not low rank. We formally characterize this bias in Section~\ref{sec:theory} and empirically demonstrate it in Section~\ref{sec:xp-synthetic}. Second, initializing the adapter $\mat A\mat B$ to zero can be thought of as learning new representations from scratch separately from the pre-trained ones ($\phi(\x) =\mat{W}\vec{x} + \mat{A}\mat{B}\vec{x} := \phi_{\text{pre-trained}}(\x) + \phi_{\text{trainable}}(\x)$), rather than leveraging the pre-trained features the model already has to initialize the adapter. 

To address these two limitations ROSA (i) continuously samples new weight subspaces throughout the training procedure and (ii) initializes the trainable sub-spaces using from the pre-trained weight matrices using SVD. The first limitation is addressed as iteratively re-sampling new subspaces effectively expands the dimension of the fine-tuned subspace. Hence, ROSA does not suffer from the low rank bias of LoRA~(which we theoretically show in Section~\ref{sec:theory}). Moreover, the second limitation is addressed as the initial trainable weights are set to be a subset of a re-parameterization of the original weight matrix.



%
In more detail, ROSA adapters successively factorize $\mat W$ into  trainable and fixed weight subspaces using SVD. Let $\W = \mat U \mat \Sigma \mat V^\top$ be the SVD of $\W$, let $\mat U_R \in \R^{M\times R}$ be the matrix obtained by selecting a random subset of $R$ columns of $\mat U$ and let $\mat \Sigma_R \in\R^{R\times R} $  and $\mat \V_R \in \R^{R\times N}$ denote the matrices obtained by selecting the same subset of singular values and right singular vectors. The ROSA factorization step is defined by

\begin{equation}
    \mat{w} = \mat{W}_\text{fixed} + \mat{A}\mat{B} \label{eq:ROSA-factorization}
\end{equation} 
where 
\begin{align} 
\mat{a}=
    \mat{U}_{R} \mat{\Sigma}_{R} \in\R^{M\times R} &,\ 
    \mat{b}=\mat{V}_{R} \in \R^{R\times N}, 
\\
\mat{W}_\text{fixed} &= \mat W -\mat A\mat B.
\end{align}
During training, gradients are computed only with respect to the $R$ dimensional subspace which consists of $R
 (M+N)$ parameters. In contrast, full fine-tuning requires optimizing $MN$ parameters. Thus, ROSA leads to a reduction in the number of trainable parameters given by
\begin{equation}
    \rho_\text{train} = \frac{MN}{ R (M+N)}.
\end{equation}

The factorization step is repeated throughout training at a pre-determined frequency~(e.g., after each epoch of fine-tuning).  The overall ROSA procedure is illustrated in Figure~\ref{fig:crown-jewel} and described in Algorithm~\ref{alg:ROSA}. In practice, ROSA is applied simultaneously to all weight matrices of the model to fine-tune. While each subspace sampling step is expensive, $\mathcal{O}(\max(N,M)^3)$, it is only performed once every epoch. We show in the experiment section that the sample step adds negligible time to the training procedure in practice~(see Table~\ref{tbl:train-epoch-time}).



\newcommand{\INPUT}{\item[\textbf{Input:}]}
\newcommand{\OUTPUT}{\item[\textbf{Output:}]}

\begin{algorithm}[t]
\caption{ROSA}
\label{alg:ROSA}
\begin{algorithmic}[1]
\renewcommand{\baselinestretch}{1.3}\selectfont

\INPUT 
\qquad  \\
\qquad $\mat{w} \in \R^{M \times N},$ $R \, \text{(desired rank)},$ \\
\qquad $K \, \text{(factorization frequency)},$ $\mathcal{L}\,\text{(loss function)}$

\STATE [$\mat{a}, \mat{b}] \gets 
[\mat{0}, \mat{0}]$
\FOR{$t = 1$ to $T$}
    \IF{$t \mod K == 0$} 
\STATE $\mat{w} \gets \mat{w} + 
\mat{a} 
\mat{b} $ 
\STATE $\mat{U}, \mat{\Sigma}, \mat{V}^\top \gets \textsc{SVD}(\mat{W})$
\STATE $(i_1,\cdots,i_R) \gets \textsc{RandInts}(R, \min(M,N))$

\qquad

\STATE 
$
\Bigg[
\begin{array}{c}
\mat{A} \\
\mat{B}
\end{array}
\Bigg]
 \gets \Bigg[
\begin{array}{c}
\mat{U}_{:,(i_1,\cdots,i_R)} \mat{\Sigma}_{(i_1,\cdots,i_R),(i_1,\cdots,i_R)}, \\
\mat{V}^\top_{(i_1,\cdots,i_R),:}
\end{array}
\Bigg]$

\qquad

\STATE $ \mat{W} \gets \mat{W} -\mat{A} \mat{B}$
\ENDIF


\STATE 

$ \left[\mat{a}, \mat{b}\right] \gets \left[\mat{a}, \mat{b}\right] - \nabla_{[\mat{a}, \mat{b}]} \mathcal{L}(\mat{w} + \mat{a} \mat{b}) $

\ENDFOR

\end{algorithmic}
\end{algorithm}

\subsection{Theoretical Analysis}\label{sec:theory}
In this section we formally show how the low rank parameterization of LoRA limits its expressiveness and how ROSA circumvents this limitation.

First, it is easy to see that, by construction, the residual matrices obtained by fine-tuning weight matrices using LoRA are constrained to be low rank: 
\begin{proposition}
    Let $\mat{W}_0$ be a weight matrix of a pre-trained model to be fine-tuned. Then, any fine-tuned weight matrix $\mat{W}_{\text{LoRA}}$ obtained using LoRA with rank parameter $R$ will be such that
    $\rank(\mat{W}_0 - \mat{W}_{\text{LoRA}})\leq R$.
\end{proposition}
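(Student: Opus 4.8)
The plan is to unwind the definition of a LoRA-adapted layer and observe that the residual is literally the low-rank product appearing in the parameterization. First I would recall that, per the setup in \cref{sec:method-ROSA}, a LoRA layer computes $\phi(\x) = \mat{W}_0\x + \mat{A}\mat{B}\x$ with $\mat{A}\in\R^{M\times R}$ and $\mat{B}\in\R^{R\times N}$, and that the effective fine-tuned weight matrix realizing this linear map is $\mat{W}_{\text{LoRA}} = \mat{W}_0 + \mat{A}\mat{B}$. Since only $\mat{A}$ and $\mat{B}$ are updated during training (the matrices being initialized so that $\mat{A}\mat{B}=\mat 0$), every weight matrix reachable by LoRA with rank parameter $R$ is of this form for some choice of $\mat A,\mat B$ of the stated shapes.

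Next I would compute the residual directly: $\mat{W}_0 - \mat{W}_{\text{LoRA}} = -\mat{A}\mat{B}$. The only genuine (and entirely standard) fact needed is that the rank of a matrix product is bounded by the inner dimension, i.e. $\rank(\mat{A}\mat{B}) \leq \min\{\rank(\mat A),\rank(\mat B)\} \leq R$, which follows because the columns of $\mat{A}\mat{B}$ lie in $\vspan{}$ of the columns of $\mat A$, a space of dimension at most $R$. Combining this with $\rank(-\mat{A}\mat{B}) = \rank(\mat{A}\mat{B})$ gives $\rank(\mat{W}_0 - \mat{W}_{\text{LoRA}}) \leq R$, as claimed.

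There is essentially no obstacle here: the statement is immediate once the LoRA layer is written in the form $\mat{W}_0 + \mat{A}\mat{B}$, and the proof is a one-line consequence of the rank-of-a-product inequality. The only thing worth stating carefully is that the bound holds uniformly over \emph{all} weight matrices reachable by the fine-tuning procedure — i.e. regardless of the optimizer, the data, or the number of training steps — because the functional form $\mat{W}_0 + \mat{A}\mat{B}$ with $\mat B\in\R^{R\times N}$ is preserved throughout training. This is precisely the structural bias that \cref{sec:theory} goes on to exploit when contrasting LoRA with ROSA.
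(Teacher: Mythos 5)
Your proof is correct and follows essentially the same route as the paper's: write $\mat{W}_{\text{LoRA}} = \mat{W}_0 + \mat{A}\mat{B}$ and apply the rank-of-a-product bound $\rank(\mat{A}\mat{B})\leq R$. Your additional remark that the bound holds uniformly over everything reachable during training is a fine (if implicit in the paper) clarification, but no new idea is involved.
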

\begin{proof}
    This directly follows from the fact that $\mat W_{\text{LoRA}} = \mat W_0 + \mat A \mat B$ and $\rank(\mat A \mat B)\leq R$.
\end{proof}

As a consequence, fine-tuning using LoRA suffers an unavoidable estimation bias which is not present in ROSA. In the following theorem, we (i) formally characterize this bias on a simple multivariate linear regression fine-tuning problem and (ii) derive a convergence rate of ROSA for linear regression demonstrating that it does not suffer from the same limitation.
\begin{theorem}
\label{thm:lora-estimation-bias}
\end{theorem}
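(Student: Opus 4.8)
The plan is to treat both halves of the statement as facts about the \emph{set of weight matrices reachable} by each method, specialized to a multivariate linear regression objective $\mathcal{L}(\mat{W}) = \|\mat{W}\mat{X} - \mat{Y}\|_F^2$ with inputs $\mat{X}$, targets $\mat{Y}$, pre-trained weights $\mat{W}_0$, and regression-optimal target $\mat{W}^*$. Writing $\mat{\Delta} = \mat{W}^* - \mat{W}_0$ and $\mat{C} = \mat{X}\mat{X}^\top \succ 0$, the excess loss of any $\mat{W}$ equals $\mathcal{L}(\mat{W}) - \mathcal{L}(\mat{W}^*) = \|(\mat{W} - \mat{W}^*)\mat{X}\|_F^2 = \mathrm{tr}\!\big((\mat{W}-\mat{W}^*)\mat{C}(\mat{W}-\mat{W}^*)^\top\big)$, a weighted squared Frobenius distance to $\mat{W}^*$, and this is the quantity both LoRA and ROSA are effectively minimizing.

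\textbf{Part (i): the LoRA bias.} By the Proposition above, the weights reachable by LoRA of rank $R$ are exactly $\{\mat{W}_0 + \mat{M} : \rank(\mat{M}) \le R\}$, so the smallest attainable excess loss is $\min_{\rank(\mat{M})\le R}\|(\mat{M} - \mat{\Delta})\mat{X}\|_F^2$. The substitution $\mat{N} = \mat{M}\mat{C}^{1/2}$ turns this into the unconstrained best-rank-$R$-approximation problem $\min_{\rank(\mat{N}) \le R}\|\mat{N} - \mat{\Delta}\mat{C}^{1/2}\|_F^2$, solved (Eckart--Young--Mirsky) by the truncated SVD of $\mat{\Delta}\mat{C}^{1/2}$ with optimal value $\sum_{i>R}\sigma_i(\mat{\Delta}\mat{C}^{1/2})^2$. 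This is strictly positive whenever $\rank(\mat{\Delta}\mat{C}^{1/2}) > R$, which is the irreducible estimation bias of LoRA; combined with $\lambda_{\min}(\mat{C})\,\|\mat{W}_{\mathrm{LoRA}}-\mat{W}^*\|_F^2 \le \|(\mat{W}_{\mathrm{LoRA}}-\mat{W}^*)\mat{X}\|_F^2$ it also lower-bounds the distance of any LoRA solution to $\mat{W}^*$. When $\mat{X}$ has orthonormal rows the bias is simply $\sum_{i>R}\sigma_i(\mat{\Delta})^2$, the spectral tail of $\mat{W}^* - \mat{W}_0$.

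\textbf{Part (ii): ROSA converges without bias.} I model one ROSA round (one pass of the inner loop of Algorithm~\ref{alg:ROSA} between two merge steps) as: from the current iterate $\mat{W}_t$, form $\mat{W}_{t,\mathrm{fixed}}$ by removing $R$ singular components of $\mat{W}_t$, then (approximately) minimize $\mathcal{L}(\mat{W}_{t,\mathrm{fixed}} + \mat{A}\mat{B})$ over $\mat{A},\mat{B}$ to obtain $\mat{W}_{t+1}$. First, $\mat{W}_t$ itself lies in the round-$t$ reachable set $\mat{W}_{t,\mathrm{fixed}} + \{\rank \le R\}$ (take $\mat{M}$ equal to the removed block), so $\mathcal{L}(\mat{W}_{t+1}) \le \mathcal{L}(\mat{W}_t)$: the loss never increases. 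Second, I establish a \emph{quantitative} per-round decrease, in one of two regimes: (a) a geometric contraction $\mathcal{L}(\mat{W}_{t+1}) - \mathcal{L}(\mat{W}^*) \le (1-c)\big(\mathcal{L}(\mat{W}_t) - \mathcal{L}(\mat{W}^*)\big)$ for some $c = c(R,M,N,\mat{C})>0$ (in expectation over the random index set), giving $\mathcal{L}(\mat{W}_t) \to \mathcal{L}(\mat{W}^*)$ at a linear rate; or (b), under exact inner minimization, that after at most $\lceil \rank(\mat{\Delta}\mat{C}^{1/2})/R\rceil$ rounds the reachable set contains $\mat{W}^*$, so the bias is exactly zero. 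In both cases ROSA attains the unbiased optimum, and the number of rounds scales like $\rank(\mat{\Delta})/R$: a smaller $R$ (hence less memory) costs more rounds (slower convergence), which is precisely the ``convergence speed for memory'' trade-off claimed in the introduction. Multiplying by the per-round gradient-descent iteration count for the factorized objective $\min_{\mat{A},\mat{B}}\mathcal{L}(\mat{W}_{t,\mathrm{fixed}}+\mat{A}\mat{B})$, warm-started at the selected SVD block as ROSA does, yields the overall rate.

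\textbf{Main obstacle.} The delicate step is the per-round progress bound for ROSA. Three issues compound: the re-sampled subspace comes from the SVD of the current iterate $\mat{W}_t$, not of the residual $\mat{W}^*-\mat{W}_t$, so one must show that removing and re-optimizing a random $R$-dimensional chunk of $\mat{W}_t$ still strictly lowers the residual energy; the objective carries the data weighting $\mat{C}$, so Eckart--Young applies only after whitening and its interaction with the column/row selection must be tracked; and the inner problem in $(\mat{A},\mat{B})$ is nonconvex, so reaching the round optimum requires either the exact-minimization idealization or a convergence guarantee for gradient descent on this warm-started, quadratic-loss matrix-factorization landscape. I would first prove the clean orthonormal-$\mat{X}$, exact-minimization version to expose the mechanism, then layer on the covariance and gradient-descent refinements.
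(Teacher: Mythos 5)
Your LoRA half is correct and is essentially the paper's own argument in different clothing: the paper quotes the reduced-rank-regression solution and uses the Pythagorean split of $\|\mat{X}\mat{W}_{\text{LoRA}}-\mat{Y}\|_F^2$ over the range of $\mat{X}$, while you re-derive the same optimum by whitening with $\mat{C}^{1/2}$ and applying Eckart--Young; since the nonzero singular values of $(\mat{W}^*-\mat{W}_0)\mat{C}^{1/2}$ coincide with those of $\mat{\Pi}_{\mat{X}}\mat{Y}-\mat{X}\mat{W}_0$, your tail-sum bound is the paper's bound (both arguments implicitly need the data covariance to be invertible, which the paper also assumes by writing $(\mat{X}^\top\mat{X})^{-1}$).

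The genuine gap is in the ROSA half. The theorem's idealization is simpler than the model you analyze: at step $t$ the "fixed" part is the \emph{entire} current iterate $\mat{W}_{t-1}$, and $(\mat{A}_t,\mat{B}_t)$ is an exact minimizer over \emph{all} rank-$R$ products --- there is no SVD-block removal and no random index set in the statement being proved. Under that model the paper's proof is constructive: the exact step-$t$ update is $((\mat{X}^\top\mat{X})^{-1}\mat{X}^\top\mat{Y}-\mat{W}_{t-1})\sum_{i=(t-1)R+1}^{tR}\vec{v}_i\vec{v}_i^\top$, i.e.\ each step is a truncated-SVD (best rank-$R$) step on the projected residual, so by the recurrence $\mat{W}_t=\mat{W}_0+((\mat{X}^\top\mat{X})^{-1}\mat{X}^\top\mat{Y}-\mat{W}_0)\sum_{i=1}^{tR}\vec{v}_i\vec{v}_i^\top$ the rank of $\mat{\Pi}_{\mat{X}}\mat{Y}-\mat{X}\mat{W}_t$ drops by exactly $R$ per step and vanishes after $T=\lceil\rank(\mat{X}\mat{W}_0-\mat{Y})/R\rceil$ steps. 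You instead model a round as "remove a random SVD block of $\mat{W}_t$, then re-optimize," and the two facts you propose to rely on are precisely the missing content: the geometric contraction with an unspecified constant $c(R,M,N,\mat{C})>0$ is asserted, not proved, and would in any case give only asymptotic convergence, weaker than the finite-step claim in the theorem; and the claim that after $\lceil\rank(\mat{\Delta}\mat{C}^{1/2})/R\rceil$ rounds the set $\mat{W}_{t,\mathrm{fixed}}+\{\rank\le R\}$ contains $\mat{W}^*$ is unproven and not clearly true in your model, since $\mat{W}^*-\mat{W}_{t,\mathrm{fixed}}$ is the residual plus the removed rank-$R$ block and can have rank as large as $\rank(\mat{W}^*-\mat{W}_t)+R$; Weyl-type singular-value inequalities only recover the monotonicity you already established. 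What is needed --- and what the paper supplies --- is the identification of each exact step with the best rank-$R$ approximation of the current projected residual, propagated by induction so that successive steps consume disjoint blocks of $R$ singular directions of the \emph{initial} residual; without that, your Part (ii) shows only that the loss is non-increasing, not that the bias vanishes in $T$ steps.
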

\begin{theorem}
    Consider a simple multivariate least-square regression problem:
    $$\argmin_{\mat{W} } \|  \mat{X}\mat{W} - \mat{Y}\|^2_F$$
    where $ \mat{X}\in \Rbb^{n\times d}$ and $\mat{Y} 
    \in\Rbb^{n\times p}$ are the input and output data matrices, respectively. \emph{We assume that there exists a solution achieving zero error\footnote{This is only to simplify the theorem's statement. In the appendix we show a more general version of this theorem without this assumption}.} 

    Consider the sequence of fine-tuned weight matrices  obtained by ROSA with rank parameter $R$ starting from a pre-trained weight matrix $\mat{W}_0$, assuming that each intermediate minimization problem is solved exactly: 
    $$\mat{W}_t = \mat W_{t-1} + \mat A_t \mat B_t $$ where  
    \begin{multline}
        \mat A_t, \mat B_t = \\
        \argmin_{\mat{A}\in\Rbb^{n\times R},\mat{B}\in\Rbb^{R\times n}} \| \mat{X}(\mat{W}_{t-1} + \mat{AB}) - \mat{Y}\|^2_F.
    \end{multline}

    Then, ROSA will converge to a fine-tuned matrix achieving zero error in at most 
     $$T=\left\lceil \frac{\rank(\mat{XW}_0 - \mat{Y})}{R} \right\rceil$$ 
     steps. That is, $\|  \mat{X}\mat{W}_t - \mat{Y}\|^2_F = 0 $ as soon as $t \geq T$.

    In contrast, the error achieved by LoRA with rank parameter $R$ is lower bounded as
    
    $$\|  \mat{X}\mat{W}_{\text{LoRA}} - \mat{Y}\|^2_F \geq \sum_{i=R+1}^{\min(d,p)} \sigma_i(\mat{\Pi}_{\mat{X}\mat{Y}} - \mat{XW}_0)^2$$
    
    where $\sigma_i(\vec M)$ denotes the $i$th singular value of a matrix $\mat M$~(ordered decreasingly) and $\mat{\Pi}_{\mat{X}}$ is the matrix of the orthogonal projection onto the range of $\mat{X}$.
    \label{thm:lora-estimation-bias}
\end{theorem}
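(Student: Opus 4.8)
The plan is to reduce both halves of the statement to the classical Eckart--Young--Mirsky low-rank approximation theorem. The structural fact that makes this work is the zero-error hypothesis: if $\mat X\mat W^\star=\mat Y$ for some $\mat W^\star$, then every column of $\mat Y$ lies in $\mathrm{range}(\mat X)$, so $\mat{\Pi}_{\mat X}\mat Y=\mat Y$ and every residual $\mat X\mat W_t-\mat Y=\mat X(\mat W_t-\mat W^\star)$ has all its columns in $\mathrm{range}(\mat X)$. Throughout I use the dimensionally consistent sizes $\mat W\in\R^{d\times p}$, $\mat A\in\R^{d\times R}$, $\mat B\in\R^{R\times p}$ (reading the statement's $\Rbb^{n\times R},\Rbb^{R\times n}$ accordingly), write $[\mat M]_R$ for a best rank-$R$ truncation of $\mat M$, and set $r_0:=\rank(\mat X\mat W_0-\mat Y)$.

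\emph{Convergence of ROSA.} Put $\mat R_{t-1}:=\mat X\mat W_{t-1}-\mat Y$, so the $t$-th subproblem reads $\min_{\mat A,\mat B}\|\mat R_{t-1}+\mat X\mat A\mat B\|_F^2$. First I would identify the feasible set: $\{\mat X\mat A\mat B:\mat A\in\R^{d\times R},\mat B\in\R^{R\times p}\}$ is exactly the set of matrices of rank at most $R$ whose column space lies in $\mathrm{range}(\mat X)$ (``$\subseteq$'' is immediate; ``$\supseteq$'' follows by taking a rank factorization $\mat M=\mat C\mat D$ with $\mathrm{col}(\mat C)=\mathrm{col}(\mat M)\subseteq\mathrm{range}(\mat X)$, padding $\mat C,\mat D$ to $R$ columns/rows, and solving the consistent system $\mat X\mat A=\mat C$). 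Since $\mat R_{t-1}$ itself lies in this subspace, so does $[\mat R_{t-1}]_R$, and Eckart--Young gives that the optimal value of the subproblem is $\sum_{i>R}\sigma_i(\mat R_{t-1})^2$, attained with $\mat X\mat A_t\mat B_t=-[\mat R_{t-1}]_R$; hence the updated residual $\mat R_t:=\mat R_{t-1}+\mat X\mat A_t\mat B_t$ has $\|\mat R_t\|_F^2=\sum_{i>R}\sigma_i(\mat R_{t-1})^2$ for \emph{every} optimal $(\mat A_t,\mat B_t)$. I then upgrade this to the exact identity $\rank(\mat R_t)=\max\{0,\rank(\mat R_{t-1})-R\}$: writing $\mat N:=\mat X\mat A_t\mat B_t$ with $\rank(\mat N)\le R$, so that $\mat R_{t-1}=\mat R_t+(-\mat N)$, Weyl's singular-value inequality $\sigma_{i+j-1}(\mat P+\mat Q)\le\sigma_i(\mat P)+\sigma_j(\mat Q)$ applied with $\mat P=\mat R_t$, $\mat Q=-\mat N$, $j=R+1$ (so $\sigma_j(\mat Q)=0$) yields $\sigma_{i+R}(\mat R_{t-1})\le\sigma_i(\mat R_t)$ for all $i\ge1$; summing squares, $\|\mat R_t\|_F^2\ge\sum_{i\ge1}\sigma_{i+R}(\mat R_{t-1})^2=\sum_{k>R}\sigma_k(\mat R_{t-1})^2=\|\mat R_t\|_F^2$, forcing $\sigma_i(\mat R_t)=\sigma_{i+R}(\mat R_{t-1})$ for all $i$, whence $\sigma_i(\mat R_t)>0$ iff $i+R\le\rank(\mat R_{t-1})$. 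Iterating, $\rank(\mat R_t)=\max\{0,r_0-tR\}$, which is $0$ once $t\ge\lceil r_0/R\rceil=T$, and once $\mat R_t=\mathbf 0$ every later subproblem keeps it $\mathbf 0$. So $\|\mat X\mat W_t-\mat Y\|_F^2=0$ for all $t\ge T$.

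\emph{Lower bound for LoRA.} By the preceding Proposition, $\rank(\mat W_0-\mat W_{\text{LoRA}})\le R$, so $\mat M:=\mat X\mat W_{\text{LoRA}}-\mat X\mat W_0$ has rank at most $R$; dropping the extra column-space constraint on $\mat M$ (which only decreases the infimum), Eckart--Young gives $\|\mat X\mat W_{\text{LoRA}}-\mat Y\|_F^2=\|(\mat X\mat W_0-\mat Y)+\mat M\|_F^2\ge\min_{\rank(\mat M')\le R}\|(\mat X\mat W_0-\mat Y)+\mat M'\|_F^2=\sum_{i=R+1}^{\min(n,p)}\sigma_i(\mat X\mat W_0-\mat Y)^2$. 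To match the stated form, use $\mat{\Pi}_{\mat X}\mat Y=\mat Y$ so that $\sigma_i(\mat X\mat W_0-\mat Y)=\sigma_i(\mat{\Pi}_{\mat X}\mat Y-\mat X\mat W_0)$, and note $\mat X\mat W_0-\mat Y=\mat X(\mat W_0-\mat W^\star)$ has rank $\le\rank(\mat X)\le d$, so all singular values of index $>\min(d,p)$ vanish and the sum may be truncated at $\min(d,p)$.

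\emph{Main obstacle.} The only genuinely delicate point is the exact rank drop per ROSA step when singular values coincide at the truncation threshold, so that the subproblem minimizer is not unique and the closed form $-[\mat R_{t-1}]_R$ cannot simply be invoked; the Weyl-inequality sandwich above resolves this by pinning the singular values of $\mat R_t$ down from optimality of the norm alone, independently of which minimizer is returned. (The appendix's version without the zero-error assumption should follow the same plan after the orthogonal decomposition $\|\mat X\mat W-\mat Y\|_F^2=\|(\I-\mat{\Pi}_{\mat X})\mat Y\|_F^2+\|\mat{\Pi}_{\mat X}(\mat X\mat W-\mat Y)\|_F^2$, applying the above to the reducible second term.)
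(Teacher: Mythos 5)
Your proposal is correct, but it proves the theorem by a genuinely different route than the paper. The paper works with the closed-form solution of the reduced rank regression problem (citing Izenman): at each step it writes the minimizer explicitly as $\mat{A}\mat{B} = ((\mat{X}^\top\mat{X})^{-1}\mat{X}^\top\mat{Y}-\mat{W}_{t-1})\sum_{i\le R}\vec{v}_i\vec{v}_i^\top$, shows by an explicit recurrence that the residual after step $t$ is $(\mat{\Pi}_{\mat{X}}\mat{Y}-\mat{X}\mat{W}_0)\sum_{i>tR}\vec{v}_i\vec{v}_i^\top$, so that $\mat{W}_t$ accumulates the top $tR$ right singular directions, and it obtains the LoRA bound by a Pythagorean split through $\mat{\Pi}_{\mat{X}}\mat{Y}$ followed by the same closed form. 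You instead track only the residual, characterize the feasible set $\{\mat{X}\mat{A}\mat{B}\}$ so each subproblem reduces to Eckart--Young, and then use Weyl's singular-value inequality to pin down the singular values of the new residual from optimality of the norm alone; the LoRA bound comes from the rank-$R$ Proposition plus Eckart--Young, with the zero-error hypothesis used to rewrite $\mat{X}\mat{W}_0-\mat{Y}$ as $\mat{\Pi}_{\mat{X}}\mat{Y}-\mat{X}\mat{W}_0$. Your route buys two things the paper's argument glosses over: it is minimizer-agnostic (the exact rank drop per step holds even when tied singular values make the reduced-rank-regression solution non-unique, whereas the paper's recurrence tacitly selects the canonical solution), and it never needs $\mat{X}^\top\mat{X}$ to be invertible. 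What the paper's route buys in exchange is an explicit closed form for the ROSA iterates and a proof that works verbatim without the zero-error assumption (yielding convergence to $\|\mat{\Pi}_{\mat{X}}\mat{Y}-\mat{Y}\|_F$ and the LoRA bound in its stated form), while your argument as written leans on that assumption in both halves and only sketches the orthogonal-decomposition fix for the general case.
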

\begin{proof}
    See Appendix A.
\end{proof}
Several observations are in order. First, Theorem~\ref{thm:lora-estimation-bias} shows that even with rank parameter $R=1$, ROSA will converge to the optimal solution of the linear regression fine-tuning problem (assuming that ROSA exactly solves the minimization problem between each factorization step). Second, increasing the rank parameter will lead to faster convergence. This suggests that the rank parameter $R$ in ROSA controls a trade-off between memory requirement and convergence speed. This is in stark contrast to LoRA, where the rank parameter controls the trade-off between memory requirement and \emph{expressiveness}, as demonstrated in the previous theorem. 

In the next section, we empirically demonstrate that this theoretical result also holds in practice when using LoRA and ROSA to fine-tune non-linear models trained using gradient based methods. 

\begin{figure*}[tb]
    \centering
    \begin{minipage}[b]{0.49\textwidth}
        \includegraphics[width=\textwidth]{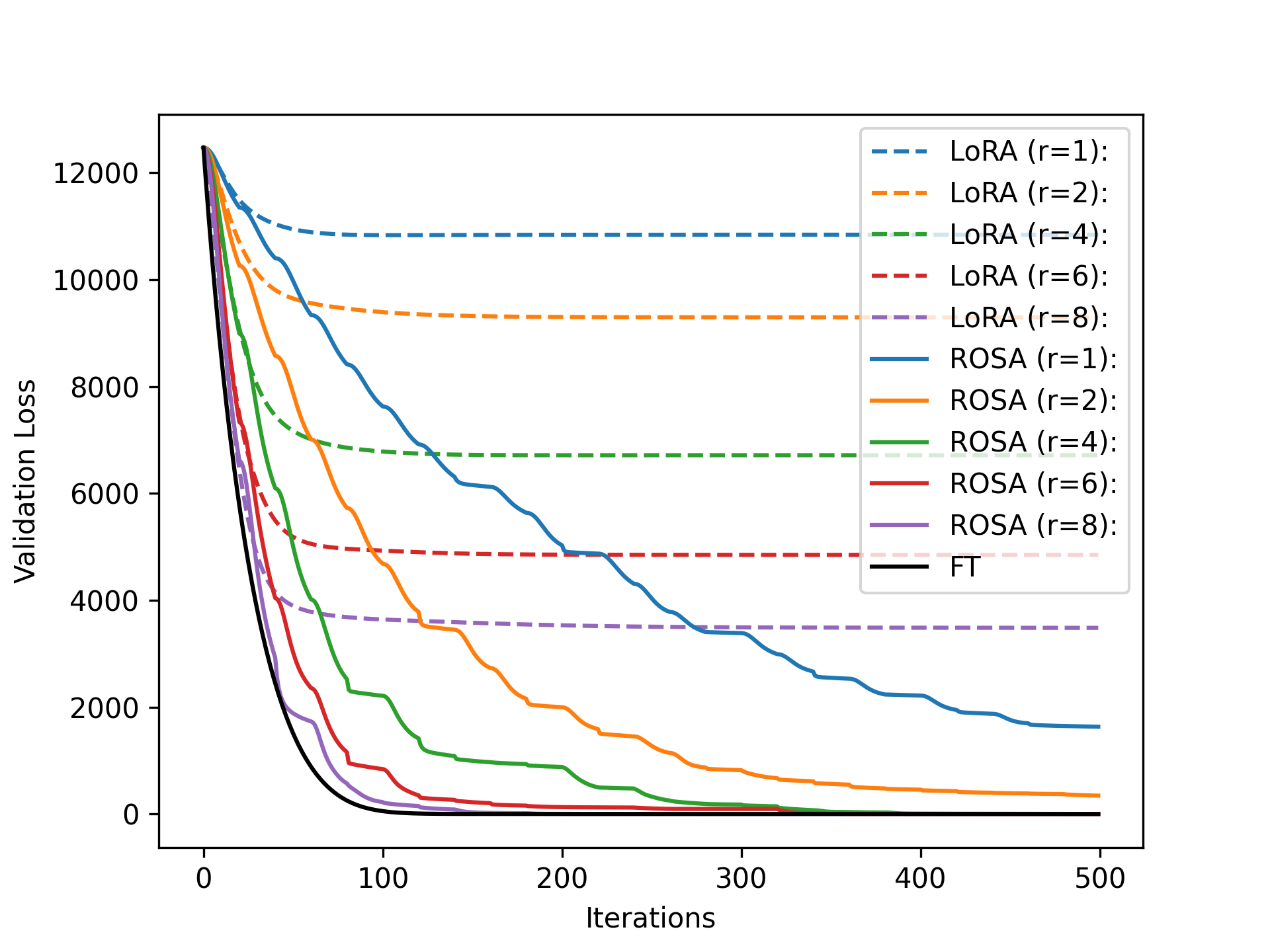}
        \caption*{(a)}
        \label{fig:synthetic-data-mlp1}
    \end{minipage}
    \hfill
    \begin{minipage}[b]{0.49\textwidth}
        \includegraphics[width=\textwidth]{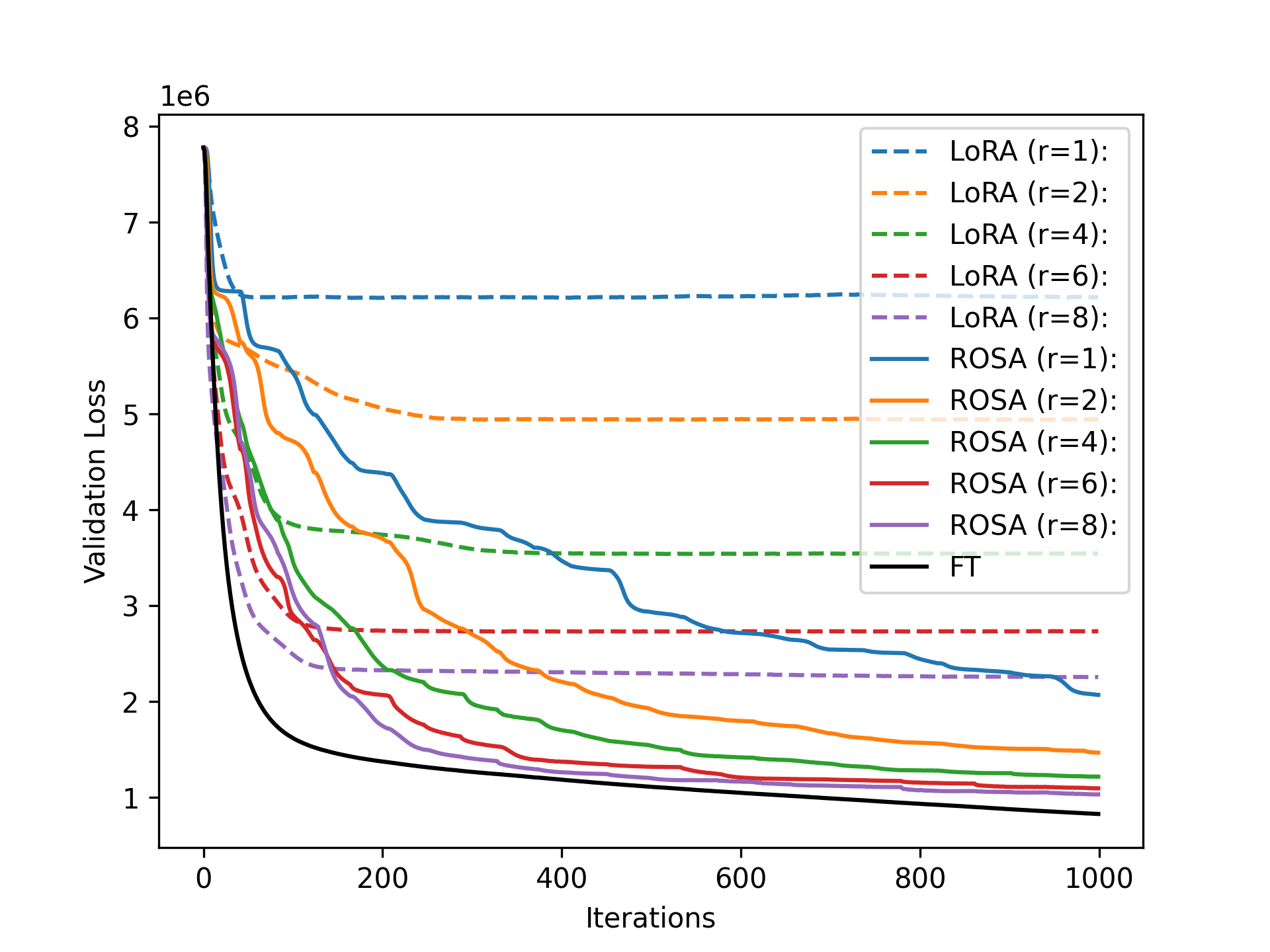}
        \caption*{(b)}
        \label{fig:synthetic-data-mlp2}
    \end{minipage}
    \caption{Validation loss curves of the training procedure for (a) 1-layer MLP and (b) 2-layer MLP (with ReLU activation). Both models are trained and evaluated on synthetic data that is generated from a randomly initialized MLP and a random low-rank adapter of rank 24. The models are trained to fit the synthetic data using the mean squared error loss function. \revision{This figure demonstrates that  ROSA can find solutions with similar performance to full fine-tuning (FT) in practice.
    }}
    \label{fig:synthetic-data}
\end{figure*}

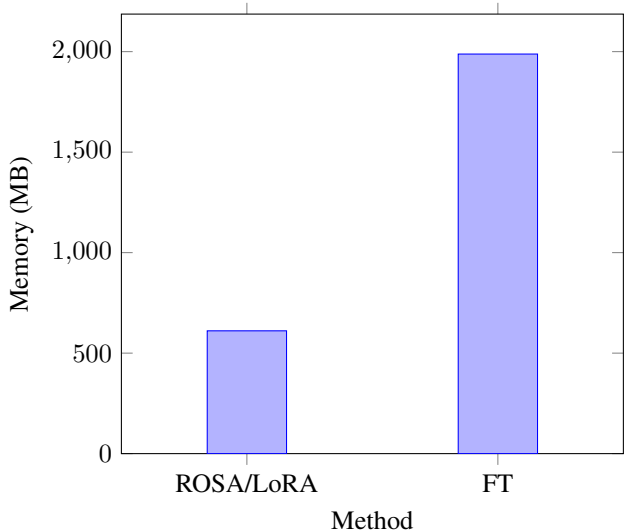
\begin{figure}[tb]
    \centering
    \begin{tikzpicture}[baseline=(current bounding box.south)]
        \begin{axis}[
            width=\linewidth, 
            height=0.9\linewidth, 
            bar width=30pt,
            ybar,
            xlabel={Method},
            ylabel={Memory (MB)},
            ymin=0,
            xmin=-0.5, 
            xmax=1.5, 
            xtick=data,
            xticklabels from table={memory.dat}{Category}
        ]
            \addplot table [x expr=\coordindex, y=Value, col sep=space] {memory.dat};
        \end{axis}
    \end{tikzpicture}
    \caption{Memory usage during fine-tuning of $\text{RoBERTa}_\text{base}$ on the CoLA GLUE benchmark task, using ROSA compared with LoRA and full fine-tuning.}
    \label{fig:ROSA-lora-memory-usage}
\end{figure}

\begin{figure*}
    \centering
    \includegraphics[width=0.9\linewidth]{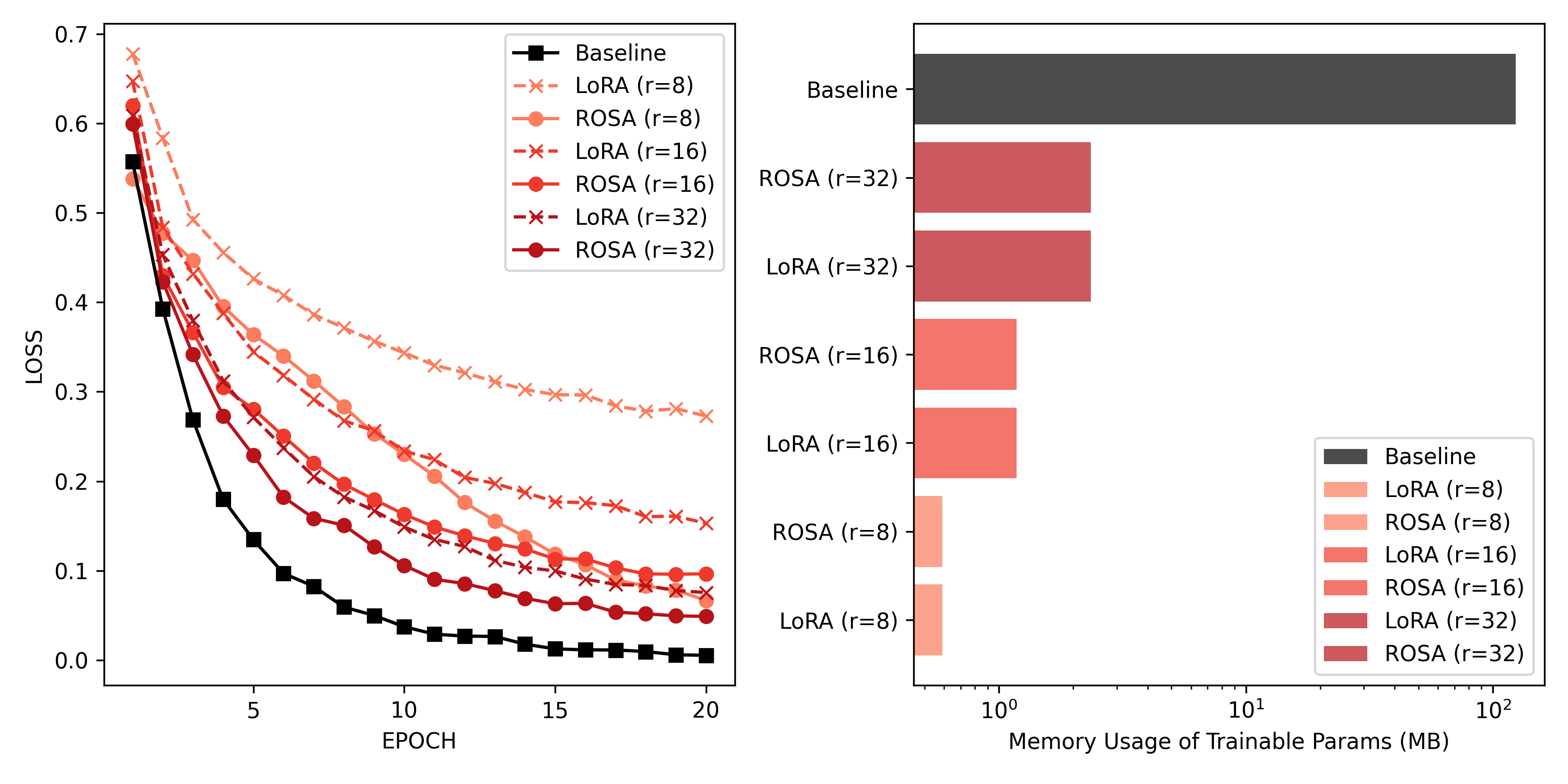}
    \caption{Analysing the trade-off between convergence rate and rank values for ROSA. In ROSA low rank values lead to a slower convergence rate. In contrast, LoRA models are limited by their rank. 
    ROSA, LoRA and the baseline models all run in the same amount of time (approximately 155 seconds per epoch).}
    \label{fig:rosa-vs-lora-train-loss-cola}
\end{figure*}



\begin{table}[b]
    \centering
    \caption{Runtime of one epoch of fine-tuning of 
    $\text{RoBERTa}_\text{base}$ (125M parameters) on the CoLA task, using ROSA and full fine-tuning. The experiment is conducted on a single GPU~(NVIDIA A100-SXM4) with an input batch of 32 sequences of length 512. At every epoch a ROSA factorize step is performed, which adds negligible latency to the runtime during training.}
    \label{tbl:train-epoch-time}
    \begin{tabular}{lcc}
    \toprule
    & Factorize Time (s)&Epoch Time (s)\\
    \midrule
    FT& -&$157 _{ \pm 0.16}$\\
    LoRA& - & $152_{ \pm 0.56}$ \\
    ROSA& 
    $4.03 _{\pm 3.67}$&$153_{ \pm 0.16}$ \\
    \bottomrule
    \end{tabular}
\end{table}

\section{Experiments}
In this section we compare the downstream performance of a model adapted using ROSA compared with LoRA \citep{hu2022lora} and \textsf{(IA)\textsuperscript{3}} \citep{liu2020tfew} (as all three methods add zero latency overhead at inference time). For better comparison, in all experiments we use our own implementation for LoRA and \textsf{(IA)\textsuperscript{3}}~(detailed in Appendix B). In Section~\ref{sec:xp-synthetic} we evaluate the performance of MLP models adapted to synthetic data, while Sections~\ref{sec:xp-glue} \& \ref{sec:xp-nlg} evaluate the performance of $\text{RoBERTa}_\text{base}$ \citep{liu2019roberta} and $\text{GPT-2}$ \citep{radford2019language} on the GLUE and E2E benchmarks, respectively. In our experiments using transformer models, we only apply the PEFT methods to the attention layers, following a similar approach to \citet{hu2022lora}.

\subsection{Synthetic Data}\label{sec:xp-synthetic}
We first design two simple regression experiments with synthetic data to validate that the increased expressiveness of ROSA, illustrated in Theorem~\ref{thm:lora-estimation-bias} for linear models, indeed leads to better results when fine-tuning non-linear models via Stochastic Gradient Descent.  

To generate the synthetic data, we start by randomly initializing an MLP model $f$. We then add low rank matrices (rank=24), which are also randomly initialized, in parallel to the weights of $f$. This gives us the true model $f^*$, which we want to approximate. The synthetic data $\mathcal{D} = \{(\vec{x}_i, \vec{y}_i)\}^n_{i=1}$ is generated by sampling $\vec{x}_i \sim \mathcal{N}(\vec 0, \sigma \mat I)$ and $\vec{y}_i = f^*(\vec{x}_i)$.  

The results are summarized in Figure~\ref{fig:synthetic-data}, where we compare the evolution of the validation loss of ROSA and LoRA for fine-tuning the original MLP model $f$ to the data $\mathcal{D}$ generated from the target task $f^*$. \revision{As observed in Figure~\ref{fig:synthetic-data}, ROSA at different rank values finds solutions with similar performance to full fine-tuning. This demonstrates that even in a more practical setting than the one of Theorem~\ref{thm:lora-estimation-bias}(namely with non-linear models trained by gradient descent)  ROSA can match the performances of full fine-tuning}. Moreover, for both 1-layer and 2-layers MLPs, we see that the rank limitation of LoRA prevents it from fully adapting to the target task: while increasing the rank leads to better validation loss, the unavoidable low-rank bias is clearly demonstrated by the convergence to a sub-optimal loss. In contrast, ROSA always converges towards the optimal loss, even with rank parameter set to 1, and increasing the rank parameter leads to faster convergence to the optimal loss. 
Notably, using ROSA to adapt a two layer MLP containing a non-linearity recovers a model that well approximates the true model used to generate the data. This suggests that the formal result shown in Theorem~\ref{thm:lora-estimation-bias} holds beyond the simple linear regression setting. 

\subsection{GLUE Experiments}\label{sec:xp-glue}
In this section we compare ROSA against LoRA and \textsf{(IA)\textsuperscript{3}}, by adapting $\text{RoBERTa}_\text{base}$ (125M) \citep{liu2019roberta} on various tasks taken from the GLUE and SuperGLUE natural language understanding benchmarks \citep{wang2019glue, superglue}. These benchmarks cover a wide variety of natural language understanding tasks, including logical entailment, grammatical acceptability, question-answering, textual similarity, and sentiment analysis. The pre-trained model weights for $\text{RoBERTa}_\text{base}$ are taken from the Huggingface library \citep{wolf2020transformers}. A description of the specific subset of GLUE and SuperGLUE tasks tested on is available in Appendix B. Unlike some previous works which initialize the weights for MRPC, RTE, and STS-B with fine-tuned MNLI task-specific weights, we initialize the weights for all tasks with only the pre-trained RoBERTa weights for a fair comparison.

These tasks were selected to give a broad overview of ROSA's performance across a variety of different natural language tasks. We report development set performance for all tasks.

In Table~\ref{tbl:glue} we see that ROSA outperforms LoRA and \textsf{(IA)\textsuperscript{3}} by a significant margin on multiple tasks. Most notably, on CoLA using rank equal to eight we obtain Matthew's correlation coefficients of \textbf{64.80} for ROSA, \textbf{54.27} for LoRA and \textbf{55.18} for \textsf{(IA)\textsuperscript{3}}. Furthermore, ROSA remains as memory efficient as LoRA (Figure~\ref{fig:ROSA-lora-memory-usage}), and the factorization steps in ROSA add negligible latency (Table~\ref{tbl:train-epoch-time}). We provide training curves of $\text{RoBERTa}_\text{base}$ fine-tuned on CoLA for 10 epochs in the Appendix.

\begin{table*}[tb]
\centering
\caption{Performance of $\text{RoBERTa}_\text{base}$ fine-tuned using various PEFT methods (ROSA, LoRA and \textsf{(IA)\textsuperscript{3}}) on the GLUE benchmark tasks.We report the matched validation accuracy for MNLI, Matthew's correlation coefficient for CoLA, Pearson correlation for STS-B and accuracy for all other tasks.(Params denotes the number of trainable parameters) }
\label{tbl:glue}
\begin{tabular}{lcrrrrrrrr}\toprule
Method & Params (M) &CoLA &MRPC &QNLI &RTE &STS-B &MNLI &SST2 & BoolQ  \\
\midrule
FT &125 &63.50 &89.95 &92.60 &77.62 &90.69 &86.296 &94.38 &82.11  \\ \midrule
\textsf{(IA)\textsuperscript{3}} &0.1 &55.18 &87.50 & 82.52 &68.59 &88.41 &77.06 & 92.55 &73.85 \\
LoRA (r=2) &0.2 &53.42 &88.72 &92.10 &72.56 &83.91 &85.37 &\textbf{93.69} &72.96  \\
ROSA (r=2) &0.2 &\textbf{62.08} &\textbf{89.46} &92.20 &\textbf{73.64} &\textbf{89.32} &\textbf{86.25} &92.55 &\textbf{76.78}  \\ \midrule
LoRA (r=8) &0.6 &54.27 &88.24 &92.20 &69.31 
& 82.10 
&85.88 &\textbf{93.57} &68.37 \\
ROSA (r=8) &0.6 &\textbf{64.80} &88.73 &\textbf{92.80} &\textbf{72.56} 
& \textbf{90.11}
&\textbf{87.09} &93.11 &\textbf{77.31}  \\
\bottomrule
\end{tabular}
\end{table*}

\begin{figure}[t]
    \centering
    \begin{minipage}[b]{0.45\textwidth}
        \includegraphics[width=\textwidth]{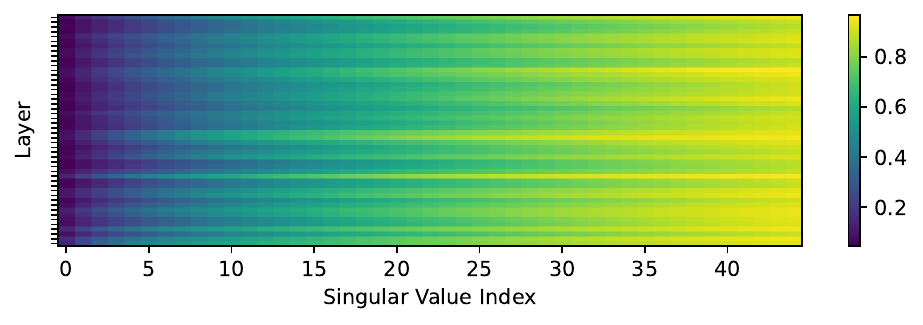}
        \caption*{(a)}
    \end{minipage}
    \hfill
    \begin{minipage}[b]{0.45\textwidth}
        \includegraphics[width=\textwidth]{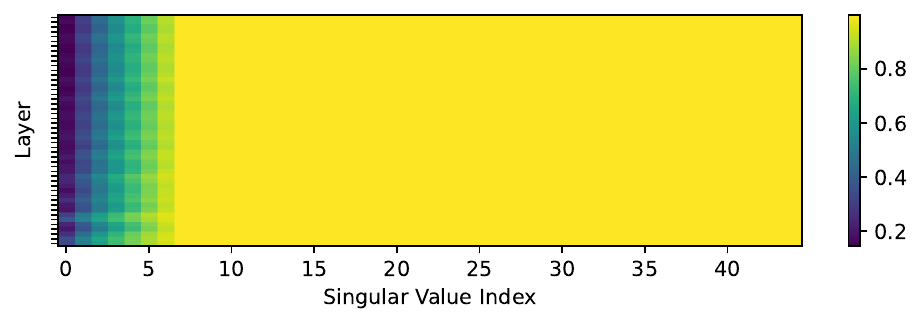}
        \caption*{(b)}
    \end{minipage}
    \caption{Plot of the cumulative sum of singular values of residual matrices. The residual matrices in these plots are obtained from fine-tuning $\text{RoBERTa}_\text{base}$ on CoLA for 10 epochs and achieving a Matthew's Correlation score of \textbf{(a) $64.80$} with ROSA and \textbf{(b) $54.27$} with LoRA. The figure demonstrates that the rank of the residual matrices obtained when adapting the $\text{RoBERTa}_\text{base}$ using ROSA, is indeed far greater than the rank of the residuals obtained when fine-tuning using LoRA.}
    \label{fig:singular-values-plot}
\end{figure}

\subsection{NLG Experiments}\label{sec:xp-nlg}
In this section we investigate the performance of ROSA in the natural language generation (NLG) setting. Namely, we compare the performance of GPT-2 \cite{radford2019language} using ROSA compared with LoRA and \textsf{(IA)\textsuperscript{3}}, when finetuned on the E2E NLG task \cite{novikova2017e2e}. The E2E NLG task involves producing a fluent natural language description of a restaurant given a logical form describing its various attributes. The model's generations are compared against multiple reference texts to account for variations in wording. The score provided is the maximum BLEU score across all the reference texts for a given input.

In Table~\ref{tbl:e2e} we see that ROSA outperforms LoRA and \textsf{(IA)\textsuperscript{3}} by a significant margin in the BLEU score.

\begin{table}[t]
\centering
\caption{Performance of GPT-2 finetuned on the End-to-End natural language generation task (E2E) \cite{novikova2017e2e}, using ROSA, LoRA and \textsf{(IA)\textsuperscript{3}}.}
\label{tbl:e2e}
\begin{tabular}{lcc}
\toprule
Name 
& \multicolumn{1}{c}{\# Trainable} 
& BLEU \\
& \multicolumn{1}{c}{Parameters (M)} & \\
\midrule
FT & 355 & 68 \\
\midrule
\textsf{(IA)\textsuperscript{3}} & 0.2 & 65 \\
LoRA (r=4) & 0.9 & 64 \\
ROSA (r=4) & 0.9 & \textbf{68} \\
\midrule
LoRA (r=8) & 1.7 & 67 \\
ROSA (r=8) & 1.7 & 67 \\
\bottomrule
\end{tabular}
\end{table}

\begin{table}[t]
\centering
\caption{Performance of $\text{RoBERTa}_\text{base}$ using variants of $\text{ROSA}_\text{ r=8 }$, finetuned on the CoLA and STS-B GLUE benchmark tasks.}
\label{tbl:progressive-ablation}
\centering
\begin{tabular}{lcc}
\toprule
Name & CoLA & \revision{STS-B} \\
\midrule
FT & 63.52 & \revision{90.69} \\
\midrule

$\text{LoRA}$ & 54.27 & \revision{82.10}  \\ 
$\text{ROSA}_{\text{ SVD Init}}$ & 57.08 & \revision{89.19} \\ 
$\text{ROSA}_{\text{ SVD Init + Factorize }}$ & 60.32 & \revision{89.42}   \\ 
$\text{ROSA}_{\text{ SVD Init + Factorize + Resample }}$ & \textbf{64.80} & \revision{\textbf{90.11}}  \\ 
\bottomrule
\end{tabular}
\end{table}


\subsection{Which components of ROSA lead to its performance?}
In this section we empirically study several aspects of ROSA. We highlight three key components of ROSA, on which we perform ablation studies. The key components of ROSA are:

\begin{itemize}
    \item 
    \textbf{SVD Initialization:} ROSA adapters are initialized using SVD, as opposed to LoRA's adapters  which are initialized to zero.
    \item \textbf{Factorization:} In ROSA, pre-trained weight matrices are factorized into fixed and trainable portions such that the resultant transformation remains unchanged when the adapters are initialized.
    \item \textbf{Resampling:} In ROSA the difference between the pre-trained weights and the final weights is not constrained to be low-rank, due to resampling  and merging of subpsaces throughout training.
\end{itemize}

We study the effects of progressively adding these components to ROSA in Table~\ref{tbl:progressive-ablation}. 
\revisionICML{
Specifically, we study the following ablations:

\begin{itemize}
    \item $\textbf{ROSA}_\textbf{ SVD Init}$: Each matrix $\mat{W}$ in a network is replaced at the begining of training by $\mat{W} + \mat{AB}$ where $\mat{AB} = \mat{U_{R} \Sigma_{R} V_{R}}$ is a low rank matrix constructed using a subset of the SVD of~$\mat{W}$. 
    \item $\textbf{ROSA}_\textbf{ SVD Init + Factorize}$: Each matrix is instead replaced by $\mat{W}_\text{fixed} + \mat{AB}$ where $\mat{W}_\text{fixed} = \mat{W} - \mat{AB}$. 
    \item $\textbf{ROSA}_\textbf{ SVD Init + Factorize + Resampling}$ (proposed method): Resampling the random subspace every epoch~(i.e., merging the adapter and weights and re-factorizing using SVD).
\end{itemize}
}
We also compare with LoRA in Table~\ref{tbl:progressive-ablation} which differs from  $\text{ROSA}_\text{ SVD Init}$ by initializing the adapters to zero.

We find that the progressive addition of the aforementioned components to ROSA is beneficial to its performance. The most drastic improvement is seen when resampling is added. This observation aligns with our theoretical analysis, demonstrating that the rank flexibility of ROSA, which increases the expressiveness of adapted models, leads to improved performance. 
Note also that the gain from LoRA to $\text{ROSA}_\text{SVD Init}$ aligns with our intuition that leveraging the pre-trained features to initialize the adapter is beneficial, compared to  initializing the adapter to zero and learning new features from scratch. Finally, the increase in performance acheived by the intermediate $\text{ROSA}_\text{SVD Init + Factorize}$ compared with $\text{ROSA}_\text{SVD Init}$, is likely due to mitigating fluctuations in the training process, as the resulting transformation remains unchanged after the factorization.

We further investigate the rank structure of the matrices of the most performant $\text{RoBERTA}_\text{base}$ that achieved \textbf{64.80} on CoLA (with resampling). Specifically, we plot the singular values of residual matrices (defined as the difference between the initial pre-trained weights and the final weights upon completion of fine-tuning) in Figure~\ref{fig:singular-values-plot}. As shown in the figure, the ranks of the difference matrices achieved using ROSA (Figure~\ref{fig:singular-values-plot}a) are significantly larger than than the ranks of the difference matrices achieved using LoRA (Figure~\ref{fig:singular-values-plot}b).

\begin{table}[tb]
\small
\centering
\caption{Performance of $\text{RoBERTa}_\text{base}$ using different sampling schemes for ROSA, on the CoLA GLUE benchmark task. Top/Bottom/Random sampling indicate the method used for selecting the singular vectors to initialize the trainable subspace. 
}
\label{tbl:ROSA-sampling-regimes}

\begin{tabular}{lcc}
\toprule
Name & \# Trainable Params (M) & MRPC \\
\midrule
FT & 125 & 63.52 \\
\midrule
$\text{ROSA}_\text{ r=8 } (\text{Top})$ & 0.6 & 58.88 \\ 
$\text{ROSA}_\text{ r=8 } (\text{Bottom})$ & 0.6 & \textbf{60.57} \\ 
$\text{ROSA}_\text{ r=8 } (\text{Random})$ & 0.6 & \textbf{60.32}  \\ 
\bottomrule
\end{tabular}

\end{table}

\subsection{Investigating different low-rank subspace sampling schemes}

In this section, we compare the random subspace sampling of ROSA with two other subspace selection strategies: selecting the top-$R$ or bottom-$R$ singular vectors . In doing so, we validate that  performing random selection of singular vectors is as performant as selection based on singular value information. In Table~\ref{tbl:ROSA-sampling-regimes}, we report the performance of $\text{RoBERTa}_\text{base}$ fine-tuned on CoLA using the different sampling strategies, which confirm that, on this task, random sampling performs similarly or better than other schemes.

\subsection{Limitations of ROSA}
While ROSA achieves better performance than previous state-of-the-art adaptation methods such as LoRA and \textsf{(IA)\textsuperscript{3}}, it bears one main limitation compared with other methods. Namely, it requires storage of the whole model after it is adapted for a downstream task.

Other adapter methods try to simultaneously address two challenges (1) reducing memory usage during training to ease the hardware barrier when adapting large models to a \emph{single} downstream task and (2) reducing disk space usage when adapting a base model to \emph{many} downstream tasks. 

ROSA primarily focuses on addressing point (1), making it more suitable for scenarios involving a single downstream task. In comparison, other PEFT methods are better suited for scenarios involving multiple downstream tasks. ROSA excels in its specific domain, offering the same level of expressivity as full fine-tuning while requiring less GPU memory. \revision{This eliminates the need for (1) layerwise training, which would prolong training time, and (2) model sharding that necessitates more GPUs, thereby increasing training costs.}

\subsection{Conclusion \& Future Work}
In this work we introduced ROSA: Random Subspace Adaptation. We first showed both theoretically and empirically that the low-rank nature of LoRA can often detrimentally affect its performance. In contrast, we demonstrate that ROSA can theoretically achieve the same solution as full fine-tuning. Furthermore, we demonstrate that on synthetic data ROSA indeed converges the same solution as full fine-tuning when using gradient based optimization. We evaluated ROSA against LoRA and \textsf{(IA)\textsuperscript{3}} on both natural language understanding and natural language generation tasks. Our experiments showed that ROSA achieved performance similar to full fine-tuning and outperformed other state-of-the-art methods such as LoRA and \textsf{(IA)\textsuperscript{3}} by significant margins. As our analysis was limited to adapting linear layers present in transformer models, adapting the parameters of convolution operations is an area for future work.

\subsection{Impact Statement}
The primary goal of this work  is to advance the field of Machine Learning. There are many potential societal consequences of our work: democratizing access to fine-tuning of large language models can have both positive and negative societal consequences, none which we feel must be
specifically highlighted here.

\bibliography{example_paper}
\bibliographystyle{icml2024}

\section*{Appendix}
\section{Theorem Proofs}
You may include other additional sections here.

\setcounter{theorem}{0}

\begin{theorem}
    Consider a simple multivariate least-square regression problem:
    $$\argmin_{\mat{W} } \|  \mat{X}\mat{W} - \mat{Y}\|^2_F$$
    where $ \mat{X}\in \Rbb^{n\times d}$ and $\mat{Y} 
    \in\Rbb^{n\times p}$ are the input and output data matrices, respectively. 

    Consider the sequence of fine-tuned weight matrices  obtained by ROSA with rank parameter $R$ starting from a pre-trained weight matrix $\mat{W}_0$, assuming that each intermediate minimization problem is solved exactly: 
    $$\mat{W}_t = \mat W_{t-1} + \mat A_t \mat B_t\ \ \ $$
    where
    $$\mat A_t, \mat B_t = \text{armgin}_{\mat{A}\in\mathbb{R}^{n\times R},\mat{B}\in\mathbb{R}^{R\times n}} \| \mat{X}(\mat{W}_{t-1} + \mat{AB}) - \mat{Y}\|^2_F.$$
    Then, ROSA will converge to a fine-tuned matrix achieving the optimal error in at most 
    $$T=\bigg\lceil \frac{\rank(\mat{XW}_0 - \mat{Y})}{R} \bigg\rceil$$
    steps. That is, $\|  \mat{X}\mat{W}_t - \mat{Y}\|^2_F = \| \mat{\Pi}_{XY} - \mat{Y} \|^2_F $ 
    as soon as $t \geq T$, where $\mat{\Pi}_{\mat{X}}$ is the matrix of the orthogonal projection onto the range of $\mat{X}$.

    In contrast, the error achieved by LoRA with rank parameter $R$ is lower bounded as
    $$\|  \mat{X}\mat{W}_{\text{LoRA}} - \mat{Y}\|^2_F \geq \sum_{i=R+1}^{\min(d,p)} \sigma_i(\mat{\Pi}_{\mat{X}\mat{Y}} - \mat{XW}_0)^2$$
    where $\sigma_i$ denotes the $i$th singular value~(ordered decreasingly).
\end{theorem}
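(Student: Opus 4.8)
The plan is to analyze both methods through the lens of the singular value decomposition of the residual matrix $\mat{R}_0 = \mat{\Pi}_{\mat{X}}\mat{Y} - \mat{X}\mat{W}_0$, where $\mat{\Pi}_{\mat{X}}\mat{Y}$ is the projection of the targets onto the range of $\mat{X}$ (the best achievable fit). The fundamental observation is that $\|\mat{X}\mat{W} - \mat{Y}\|_F^2 = \|\mat{\Pi}_{\mat{X}}\mat{Y} - \mat{X}\mat{W}\|_F^2 + \|\mat{Y} - \mat{\Pi}_{\mat{X}}\mat{Y}\|_F^2$ by Pythagoras, since $\mat{X}\mat{W}$ always lies in the range of $\mat{X}$; thus minimizing the loss over any parameterization is equivalent to making $\mat{X}\mat{W}$ approximate $\mat{\Pi}_{\mat{X}}\mat{Y}$ as closely as possible within the reachable set. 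The second term $\|\mat{Y} - \mat{\Pi}_{\mat{X}}\mat{Y}\|_F^2$ is the irreducible error — under the main text's simplifying assumption it is zero, and in the appendix version it is the constant $\|\mat{\Pi}_{\mat{XY}} - \mat{Y}\|_F^2$ appearing in the statement.

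For the ROSA convergence bound, first I would establish what one exact factorization-plus-optimization step achieves. At step $t$, ROSA optimizes $\mat{A}_t, \mat{B}_t$ of rank $R$ to minimize $\|\mat{X}(\mat{W}_{t-1} + \mat{A}\mat{B}) - \mat{Y}\|_F^2$. Writing $\mat{X}\mat{A}\mat{B}$ and noting this can represent any rank-$R$ matrix in the range of $\mat{X}$ (generically, when $\mat{X}$ has rank at least $R$ — I would either assume this or handle the degenerate case separately), the optimal choice drives $\mat{X}\mat{W}_t$ toward $\mat{\Pi}_{\mat{X}}\mat{Y}$ by the best rank-$R$ correction. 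The key lemma is: if the current residual $\mat{X}\mat{W}_{t-1} - \mat{\Pi}_{\mat{X}}\mat{Y}$ has rank $r$, then after one step the new residual has rank at most $\max(r - R, 0)$ — because we can subtract off the top-$R$ singular components of the residual (this is essentially the Eckart–Young argument: the best rank-$R$ approximation captures the $R$ leading singular directions exactly). Iterating, starting from $\rank(\mat{X}\mat{W}_0 - \mat{\Pi}_{\mat{X}}\mat{Y}) = \rank(\mat{\Pi}_{\mat{X}}(\mat{X}\mat{W}_0 - \mat{Y})) \le \rank(\mat{X}\mat{W}_0 - \mat{Y})$, after $T = \lceil \rank(\mat{X}\mat{W}_0 - \mat{Y})/R \rceil$ steps the residual has rank zero, i.e. $\mat{X}\mat{W}_T = \mat{\Pi}_{\mat{X}}\mat{Y}$, giving the stated optimal error.

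For the LoRA lower bound, the argument is essentially Eckart–Young in the reverse direction. LoRA produces $\mat{W}_{\text{LoRA}} = \mat{W}_0 + \mat{A}\mat{B}$ with $\rank(\mat{A}\mat{B}) \le R$, so $\mat{X}\mat{W}_{\text{LoRA}} = \mat{X}\mat{W}_0 + \mat{X}\mat{A}\mat{B}$ and the correction term $\mat{X}\mat{A}\mat{B}$ has rank at most $R$. Then $\|\mat{X}\mat{W}_{\text{LoRA}} - \mat{Y}\|_F^2 \ge \|\mat{\Pi}_{\mat{X}}\mat{Y} - \mat{X}\mat{W}_{\text{LoRA}}\|_F^2 = \|(\mat{\Pi}_{\mat{X}}\mat{Y} - \mat{X}\mat{W}_0) - \mat{X}\mat{A}\mat{B}\|_F^2$, and since $\mat{X}\mat{A}\mat{B}$ ranges over matrices of rank at most $R$, the Eckart–Young theorem gives that this is minimized by the truncated SVD, with minimum value $\sum_{i=R+1}^{\min(d,p)} \sigma_i(\mat{\Pi}_{\mat{X}}\mat{Y} - \mat{X}\mat{W}_0)^2$ — exactly the claimed bound (matching the statement's notation $\mat{\Pi}_{\mat{XY}}$ for $\mat{\Pi}_{\mat{X}}\mat{Y}$).

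The main obstacle I anticipate is the rank-reachability subtlety in the ROSA step: I need $\mat{X}\mat{A}\mat{B}$ to be able to exactly match the top-$R$ singular subspace of the current residual, which requires that this subspace lies in the column space of $\mat{X}$ (it does, since the residual is $\mat{\Pi}_{\mat{X}}$ applied to something) and that $\mat{X}$ has enough rank to support a rank-$R$ factorization. The clean way to handle this is to work entirely within $\text{range}(\mat{X})$ from the start and treat the $\rank(\mat{X}) < R$ edge case by noting the ceiling expression still works since the residual rank is itself bounded by $\rank(\mat{X})$. I would also need to be slightly careful that "solved exactly" in the ROSA recurrence does not uniquely pin down $\mat{A}_t, \mat{B}_t$ — there are many minimizers — but the rank-reduction property holds for the minimum value regardless of which minimizer is chosen, and the convergence claim only concerns the loss value, so this is not a real problem. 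Finally, I should double-check the appendix statement's irreducible-error term is handled consistently: adding the constant $\|\mat{Y} - \mat{\Pi}_{\mat{X}}\mat{Y}\|_F^2$ to both the ROSA bound and (implicitly, via the Pythagorean split) the LoRA bound recovers both displayed inequalities.
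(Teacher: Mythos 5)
Your proposal is correct and follows essentially the same route as the paper's proof: the Pythagorean split $\|\mat{X}\mat{W}-\mat{Y}\|_F^2=\|\mat{X}\mat{W}-\mat{\Pi}_{\mat{X}}\mat{Y}\|_F^2+\|\mat{\Pi}_{\mat{X}}\mat{Y}-\mat{Y}\|_F^2$, the identification of each step with a reduced-rank (Eckart--Young) approximation of the projected residual $\mat{\Pi}_{\mat{X}}\mat{Y}-\mat{X}\mat{W}_{t-1}$, a rank-drop-by-$R$ argument for ROSA, and the truncated-SVD tail bound for LoRA. The only difference is presentational: the paper invokes the closed-form reduced-rank-regression minimizer and unrolls an explicit recurrence for $\mat{W}_t$, whereas you track the residual rank abstractly (and your reachability remark that the top singular directions lie in the range of $\mat{X}$ is exactly the point that makes both arguments work).
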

\begin{proof}
    First observe that the minimization problem optimized by LoRA,
    $$\argmin_{\mat A,\mat B}\|\mat{X} (\mat W_0 + \mat{AB})-\mat Y\|^2_F $$
    is an instance of the \emph{Reduced Rank Regression} problem~\cite{izenman1975reduced}
    $$\argmin_{\mat{A}\in\Rbb^{n\times R},\mat{B}\in\Rbb^{R\times n}}\|\mat{X} \mat{AB}  - ( \mat Y - \mat X \mat W_0)\|^2_F $$
    whose optimal solution satisfies 
    \begin{equation}\label{eq:LRR.solution}
        \mat A\mat B = ((\mat{X}^\top\mat X)^{-1}\mat X \mat Y - \mat W_0) \sum_{i=1}^R\vec{v}_i\vec{v}_i^\top
    \end{equation}
    where the $\vec{v}_i$'s are the first $R$ right singular vectors of the matrix $(\mat{\Pi}_{\mat{X}\mat{Y}} - \mat{XW}_0)$ and $\mat{\Pi}_{\mat{X}}=\mat X(\mat X^\top \mat X)^{-1}\mat X^\top$. The cost of the solution computed by LoRA can thus be lower bounded by

    \begin{align*}
    \|  &\mat{X}\mat{W}_{\text{LoRA}} - \mat{Y}\|^2_F 
    \\ 
    &= 
    \|  \mat{X}\mat{W}_{\text{LoRA}} - \mat\Pi_{\mat X \mat{Y}}\|^2_F + \|  \mat\Pi_{\mat{X}}\mat{Y}\ - \mat Y\|^2_F\\
    &\geq 
     \|\mat\Pi_{\mat{X}}\mat{Y} -  \mat{X}\mat{W}_{\text{LoRA}}  \|^2_F \\
    &\geq
    \Bigg \| \mat\Pi_{\mat{X}}\mat{Y}  \\
    & \qquad -  \mat{X}\bigg(\mat{W}_0 + ((\mat{X}^\top\mat X)^{-1}\mat X \mat Y - \mat W_0) \sum_{i=1}^R\vec{v}_i\vec{v}_i^\top\bigg) \Bigg \|^2_F\\
    &=
    \| (\mat{\Pi}_{\mat{X}\mat{Y}} - \mat{XW}_0) (\mat I -  \sum_{i=1}^R \vec{v}_i\vec v_i^\top ) \|^2_F\\
    &= 
    \sum_{i=R+1}^{\min(d,p)} \sigma_i(\mat{\Pi}_{\mat{X}\mat{Y}} - \mat{XW}_0)^2
    \end{align*}

    \begin{align*}
    \|  
    &\mat{X}\mat{W}_{\text{LoRA}} - \mat{Y}\|^2_F \\
    &= 
    \|  \mat{X}\mat{W}_{\text{LoRA}} - \mat\Pi_{\mat{X}}\mat{Y}\|^2_F + \|  \mat\Pi_{\mat{X}}\mat{Y}\ - \mat Y\|^2_F\\
    &\geq 
     \|\mat\Pi_{\mat{X}}\mat{Y} -  \mat{X}\mat{W}_{\text{LoRA}} \|^2_F \\
    &\geq
    \| \mat\Pi_{\mat{X}}\mat{Y} -  \mat{X}\bigg(\mat{W}_0 + ((\mat{X}^\top\mat X)^{-1}\mat X \mat Y - \mat W_0) \sum_{i=1}^R\vec{v}_i\vec{v}_i^\top\bigg)\|^2_F\\
    &=
    \| (\mat{\Pi}_{\mat{X}\mat{Y}} - \mat{XW}_0) (\mat I -  \sum_{i=1}^R \vec{v}_i\vec v_i^\top ) \|^2_F\\
    &= 
    \sum_{i=R+1}^{\min(d,p)} \sigma_i(\mat{\Pi}_{\mat{X}\mat{Y}} - \mat{XW}_0)^2
    \end{align*}
    where we used the fact that $\langle \mat{X}\mat{W}_{\text{LoRA}} - \mat\Pi_{\mat{X}}\mat{Y}, \mat\Pi_{\mat{X}}\mat{Y}\ - \mat Y \rangle =0$ for the first equality, and the fact that $\mat W_{\text{LoRA}}$ is of the form $\mat W_0 + \mat A \mat B$ for the second inequality. This shows the second part of the theorem. 

    For the first part of the theorem, first observe that $\mat W_1 = \mat W_0 + \mat A\mat B$ where $\mat A \mat B$ is the solution of the reduced rank regression problem defined in Eq.~\eqref{eq:LRR.solution}. Similarly, one can show that the solution for the second step of ROSA is given by
    $$\mat W_2 = \mat W_1 + ((\X^\top \X)^{-1}\X^\top\Y-\W_1)\sum_{i=1}^R{\tilde{\vec v}_i\tilde{\vec v}_i^\top}$$
    where the $\tilde{\vec{v}}_i$ are the first $R$ right singular vectors of the matrix $(\mat\Pi_{\X}\Y-\X\W_1)$. However, we have
    \begin{align*}
        &\mat\Pi_{\X}\Y-\X\W_1 \\
        &=
        \mat\Pi_{\X}\Y-\X\bigg(\mat{W}_0 + ((\mat{X}^\top\mat X)^{-1}\mat X \mat Y - \mat W_0) \sum_{i=1}^R\vec{v}_i\vec{v}_i^\top\bigg) \\
        &=  
        (\mat\Pi_{\X}\Y- \X \mat{W}_0) -  (\mat\Pi_{\X\mat {Y}} - \X\mat W_0) \sum_{i=1}^R\vec{v}_i\vec{v}_i^\top \\
        &=
        (\mat\Pi_{\X\mat{Y}} - \X\mat W_0) \sum_{i={R+1}}^{\min(d,p)}\vec{v}_i\vec{v}_i^\top       
    \end{align*}

Hence the top $R$ right singular vectors of $(\mat\Pi_{\X}\Y-\X\W_1)$ are equal to the right singular vectors $\vec{v}_{R+1},\vec{v}_{R+2},\cdots,\vec{v}_{2R}$ of the matrix $(\mat\Pi_{\X}\Y-\X\W_0)$.

It follows, by recurrence, that
\begin{align*}
    \mat W_t 
    &= 
    \mat W_{t-1} + ((\X^\top \X)^{-1}\X^\top\Y-\W_{t-1}) \\
    & \qquad \qquad \sum_{i=(t-1)R+1}^{tR} {\vec v}_i{\vec v}_i^\top\\
    &=
    \mat W_{t-1} (\mat I - \sum_{i=(t-1)R+1}^{tR} {\vec v}_i{\vec v}_i^\top )  \\
    & \qquad \qquad + (\X^\top \X)^{-1}\X^\top\Y\sum_{i=(t-1)R+1}^{tR} {\vec v}_i{\vec v}_i^\top \\
    &=
    \mat{W}_0(\mat I - \sum_{i=1}^{tR} \vec v_i\vec v_i^\top ) + (\X^\top \X)^{-1}\X^\top\Y  \sum_{i=1}^{tR} \vec v_i\vec v_i^\top  \\
    &=
    \mat{W}_0 + ((\X^\top \X)^{-1}\X^\top\Y -\W_0)  \sum_{i=1}^{tR} \vec v_i\vec v_i^\top 
\end{align*}
Hence, as soon as $t > \bigg \lceil \frac{\rank(\mat{XW}_0 - \mat{Y})}{R} \bigg \rceil $, we have
$$
\X\W_t = \X\mat{W}_0 + (\mat\Pi_{\X}\Y -\X\W_0)  \sum_{i=1}^{tR} \vec v_i\vec v_i^\top = \mat\Pi_{\X}\Y 
$$
hence $\|\X\W_t-\Y\| = \| \mat\Pi_{\X}\Y  - \Y\|$, which concludes the proof.
\end{proof}

\section{Experimental setup}

\subsection{Implementation of ROSA, LoRA and \textsf{(IA)\textsuperscript{3}}}
For better comparison, we re-implement the LoRA and \textsf{(IA)\textsuperscript{3}} PEFT to share the same structure as ROSA. For all methods we use a vanilla implementation to focus on only comparing their core aspects. We list out the key differences between our implementation and those of LoRA \cite{hu2022lora} and \textsf{(IA)\textsuperscript{3}} \cite{liu2020tfew}.

\begin{itemize}
    \item We apply adapters to all attention matrices. In contrast, LoRA tunes the attention matrices to which its adapter should be applied.
    \item We do not add additional dropout modules inside the adapter, as is done in the LoRA paper.
    \item We do not apply adapters to MLP layers as done so in \textsf{(IA)\textsuperscript{3}}.
    \item We use the same number of epochs across all model types (full fine-tuning, adapter). In contrast, LoRA and \textsf{(IA)\textsuperscript{3}} experiments are typically run for far more epochs (roughly 3X) than the full fine-tuning experiments.
\end{itemize}

\subsection{GLUE Experiments}
For each experiment on GLUE we tune the LR for all three PEFT models for each selection of rank. Specifically, for a given task, model, PEFT method and rank value, we report the model that obtains the best validation set accuracy using LRs in $\{\expnumber{2}{-2}, \expnumber{2}{-3}, \expnumber{2}{-4}, \expnumber{2}{-5}\}$. 
We use a factorization frequency of $2$ in ROSA for all GLUE experiments (i.e., we merge then factorize the weight matrices once every two epochs.) We use the AdamW optimizer with 
$ \beta_1, \beta_2 = (0.9, 0.98), \; \epsilon=1e-6$ and weight decay of $0.1$. 
Our batch size is selected from the set $\{16, 32\}$ and we use a sequence length of $512$. We train all models for 10 epochs.

Below is a description of each of the GLUE/SuperGLUE tasks selected for evaluation:

\begin{enumerate}
    \item CoLA: a single-sentence classification task, where each sentence is labelled as either grammatical or not in English. The Matthews correlation coefficient is the reported metric.
    \item MRPC: a sentence-pair classification task, where each pair of sentences is labelled as either semantically equivalent (i.e. paraphrases of each other), or not.
    \item QNLI: QNLI is a sentence-pair classification task, where each pair of sentences corresponds to a paragraph from Wikipedia and a question, and the model must predict if the answer to the question is contained within the paragraph.
    \item RTE: The input for RTE is a pair of sentences, where the model must predict if the second sentence can logically be inferred from the first, or if it contradicts/is unrelated to it (binary classification).
    \item STS-B: The only regression task in the GLUE Benchmark, for STS-B the model must predict the similarity between a pair of sentences on a continuous scale of 1 to 5. The reported metric is the Pearson correlation coefficient.
    \item MNLI: The input for MNLI is a premise and a hypothesis, and the model must predict if the premise entails, contradicts, or is neutral toward the hypothesis. This is the same as RTE but with a separate neutral (unrelated) class. We report accuracy for the in-domain (matched) set.
    \item SST2: SST2 is a binary (positive/negative) sentiment classification dataset of movie reviews.
    \item BoolQ: BoolQ is a question-answering task where the input is a Wikipedia paragraph and a yes/no question where the answer is contained within the Wikipedia paragraph. The model must predict the answer to the question.
    \item WiC (Words-in-Context): WiC is a binary sentence-pair classification task of disambiguating word senses. Two sentences are provided to the model that contain the same word,, and the model must predict if the same sense of the word is used in both cases.
\end{enumerate}

\subsection{E2E NLG Experiments}
The E2E experiments are all carried out for 5 epochs. We also tune the LR in for each PEFT model by searching the space $\{\expnumber{2}{-2}, \expnumber{2}{-3}, \expnumber{2}{-4}, \expnumber{2}{-5}\}$. We use the AdamW optimizer with $ \beta_1, \beta_2 = (0.9, 0.999), \; \epsilon=1e-8$, weight decay of $0.1$, batch size of $10$ and a sequence length of $512$.

An example input and output for E2E is provided. 

Input: \texttt{name[The Vaults], eatType[pub], priceRange[more than £30], customer rating[5 out of 5], near[Café Adriatic]} 

Output: \emph{``The Vaults pub near Café Adriatic has a 5 star rating. Prices start at £30.''}

\subsection{Training Curves For GLUE Experiments}
In Figure~\ref{fig:train-curve-cola}, we plot the training loss and validation curves for the fine-tuning of $\text{RoBERTa}_\text{base}$ on CoLA for 10 epochs.

\begin{figure}[t]
    \centering
    \begin{minipage}[b]{0.48\textwidth}
        \includegraphics[width=\textwidth]{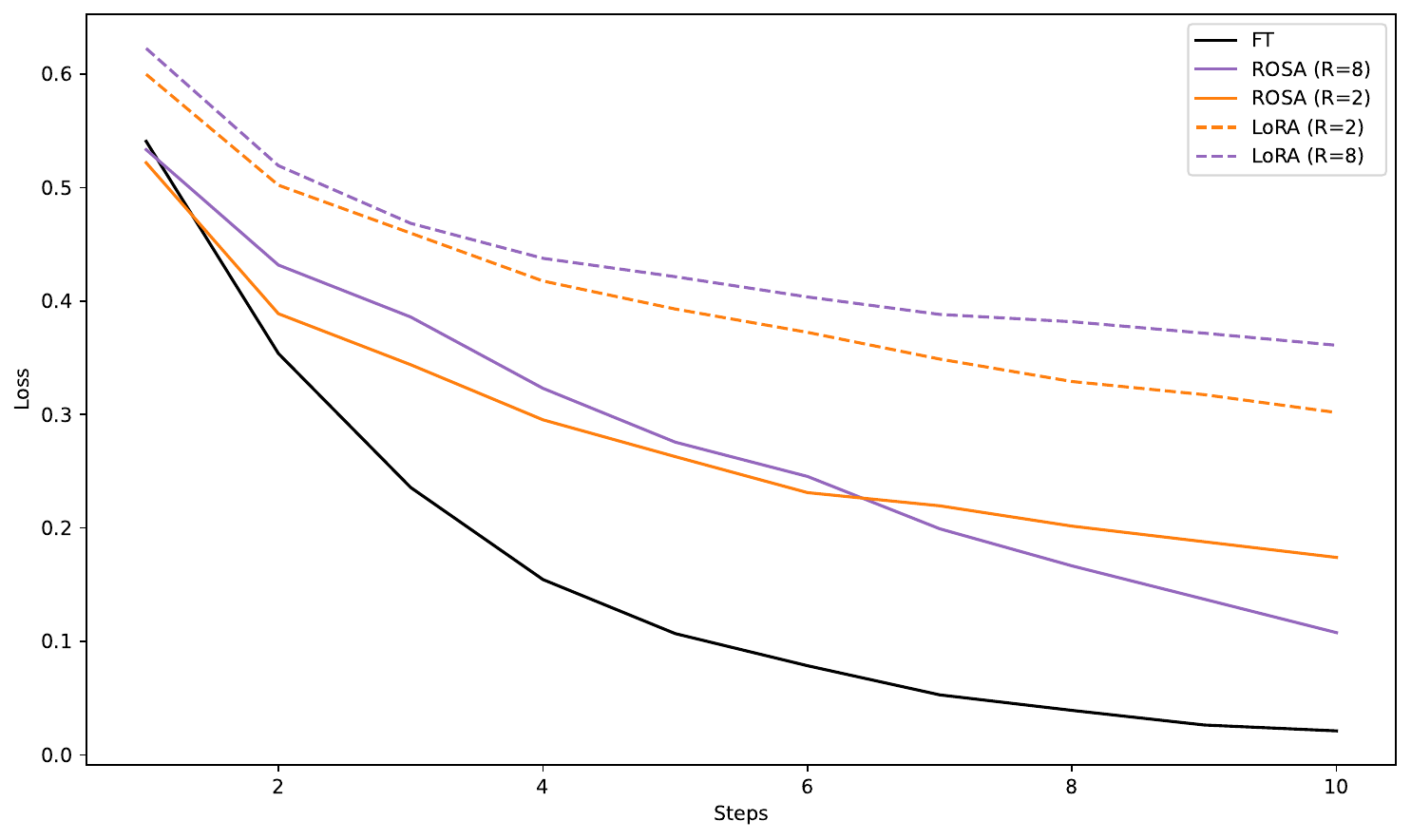}
        \caption*{(a)}
    \end{minipage}
    \hfill
    \begin{minipage}[b]{0.48\textwidth}
        \includegraphics[width=\textwidth]{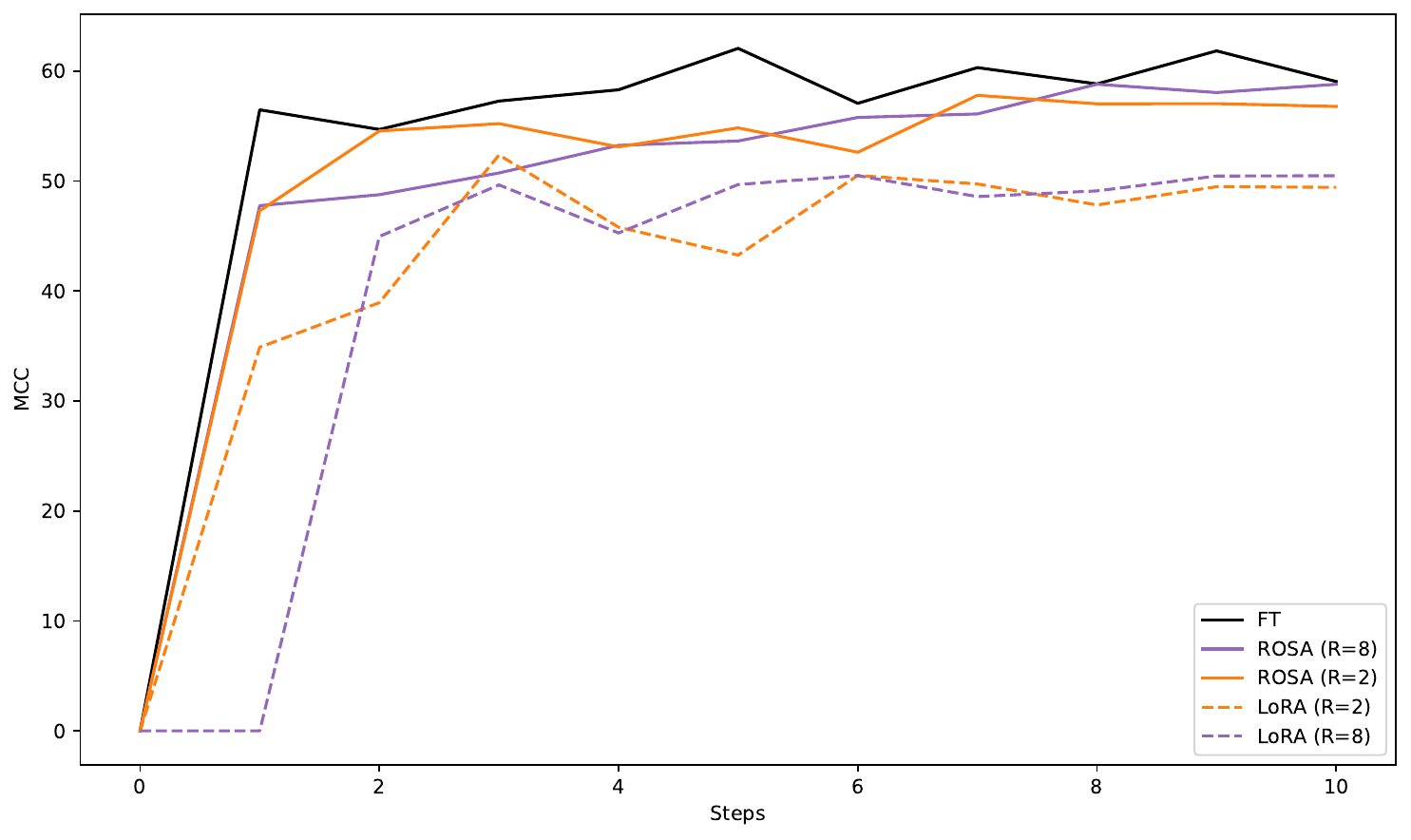}
        \caption*{(b)}
    \end{minipage}
    \caption{
    Plots of (a) Train loss and (b) and Validation Matthew's Correlation Coefficient. The plots are obtained from fine-tuning $\text{RoBERTa}_\text{base}$ on CoLA for 10 epochs.}
    \label{fig:train-curve-cola}
\end{figure}

\end{document}